\newtheorem{thm}{Theorem}
\newcommand{\plh}{{\mkern-2mu\times\mkern-2mu}} 
\title{
Energy-Based Learning Algorithms \\ for Analog Computing: \\ A Comparative Study
}
\author{%
  Benjamin Scellier
  \\
  Rain AI\\
  \texttt{benjamin@rain.ai} \\
  \And
  Maxence Ernoult \\
  Rain AI \\
  \texttt{maxence@rain.ai} \\
  \And
  Jack Kendall \\
  Rain AI \\
  \texttt{jack@rain.ai} \\
  \And
  Suhas Kumar \\
  Rain AI \\
  \texttt{suhas@rain.ai} \\
}
\begin{document}

\maketitle

\begin{abstract}
Energy-based learning algorithms have recently gained a surge of interest due to their compatibility with analog (post-digital) hardware. Existing algorithms include contrastive learning (CL), equilibrium propagation (EP) and coupled learning (CpL), all consisting in contrasting two states, and differing in the type of perturbation used to obtain the second state from the first one. However, these algorithms have never been explicitly compared on equal footing with same models and datasets, making it difficult to assess their scalability and decide which one to select in practice. In this work, we carry out a comparison of seven learning algorithms, namely CL and different variants of EP and CpL depending on the signs of the perturbations. Specifically, using these learning algorithms, we train deep convolutional Hopfield networks (DCHNs) on five vision tasks (MNIST, F-MNIST, SVHN, CIFAR-10 and CIFAR-100). We find that, while all algorithms yield comparable performance on MNIST, important differences in performance arise as the difficulty of the task increases. Our key findings reveal that negative perturbations are better than positive ones, and highlight the centered variant of EP (which uses two perturbations of opposite sign) as the best-performing algorithm. We also endorse these findings with theoretical arguments. Additionally, we establish new SOTA results with DCHNs on all five datasets, both in performance and speed. In particular, our DCHN simulations are 13.5 times faster with respect to \citet{laborieux2021scaling}, which we achieve thanks to the use of a novel energy minimisation algorithm based on asynchronous updates, combined with reduced precision (16 bits).
\end{abstract}

\section{Introduction}

Prior to the dominance of backpropagation-based machine learning, Hopfield, Hinton and others proposed an alternative `energy-based' learning (EBL) approach \citep{hopfield1984neurons,hinton1984boltzmann}. In this approach, the learning model is described by a state variable whose dynamics is governed by an energy function. An EBL model can be used to compute as follows: 1) clamp the model input, 2) let the model settle to equilibrium (that is, the configuration of lowest energy), and 3) read the model output. The objective is to modify the weights of the model so that it computes a desired input-to-output function. This is achieved thanks to an EBL algorithm. One of the earliest EBL algorithms was constrastive learning (CL)\citep{movellan1991contrastive,baldi1991contrastive}, which adjusts the model weights by contrasting two states: a `free' state where the model outputs are free, and a perturbed (or `clamped') state where the model outputs are clamped to desired values. In the machine learning literature, interest in EBL algorithms has remained limited due to the widespread success of backpropagation running on graphics processing units (GPUs). However, EBL algorithms have more recently revived interest as a promising learning framework for \emph{analog} learning machines \citep{kendall2020training,stern2021supervised}. The benefit of these algorithms is that they use a single computational circuit or network for both inference and training, and they rely on local learning rules (i.e. the weight updates are local). The locality of computation and learning makes these algorithms attractive for training adaptive physical systems in general \citep{stern2023learning}, and for building energy-efficient analog AI in particular \citep{kendall2020training}. Small-scale EBL-trained variable resistor networks have already been built \citep{dillavou2022demonstration,dillavou2023circuits,yi2023activity}, projecting a possible 10,000$\times$ improvement in energy efficiency compared to GPU-based training of deep neural networks \citep{yi2023activity}.

In recent years, various EBL algorithms have been proposed, such as equilibrium propagation (EP) \citep{scellier2017equilibrium}, the centered variant of EP \citep{laborieux2021scaling} and coupled learning (CpL) \citep{stern2021supervised}. These algorithms, which are variants of CL with modified perturbation methods, are often evaluated on different models and different datasets without being compared to CL and to one another.\footnote{For example, \citet{kendall2020training} and \citet{watfa2023energy} train layered nonlinear resistive networks using EP, while \citet{stern2021supervised} train randomly-connected elastic and flow networks using CpL.} Due to the lack of explicit comparison between these algorithms, and since the algorithmic differences between them are small, they are often gathered under the `contrastive learning' (or `contrastive Hebbian learning') umbrella name
\citep{dillavou2022demonstration,stern2023learning,peterson2022physical,lillicrap2020backpropagation,luczak2022neurons,hoier2023dual}. Consequently, many recent follow-up works in energy-based learning indifferently pick one of these algorithms without considering alternatives \citep{dillavou2022demonstration,wycoff2022desynchronous,stern2022physical,kiraz:hal-03779416,watfa2023energy,yi2023activity,dillavou2023circuits}. The main contribution of the present work is to provide an explicit comparison of the above-mentioned EBL algorithms and highlight the important differences arising when the difficulty of the task increases. Nonetheless, comparing these algorithms comes with another challenge: simulations are typically very slow. Due to this slowness, EBL algorithms have often been used to train small networks on small datasets (by deep learning standards).
\footnote{\citet{kendall2020training} use SPICE to simulate the training of a one-hidden-layer network (with 100 `hidden nodes') on MNIST, which takes one week for only ten epochs of training. \citet{watfa2023energy} train resistive networks with EP on Fashion-MNIST, Wine, and Iris. Similarly, \citet{stern2021supervised} simulate the training of disordered networks of up to 2048 nodes on a subset of 200 images of the MNIST dataset.
}
\footnote{With some exceptions, e.g. \citet{laborieux2022holomorphic} perform simulations of energy-based convolutional networks (DCHNs) on ImageNet 32x32.}
Likewise, experimental realizations of EBL algorithms on analog hardware (``physical learning machines'') have thus far been performed on small systems only. 
\footnote{\citet{dillavou2022demonstration} train a resistive network of 9 nodes and 16 edges on the IRIS dataset. \citet{yi2023activity} train a 24-33-7 memristive network on a $64\times64$ array of memristors to classify Braille words. \citet{laydevant2023training} train a 784-120-40 network (Ising machine) on a subset of 1000 images of the MNIST dataset.}

In this work, we conduct a study to compare seven EBL algorithms, including the four above-mentioned and three new ones. Depending on the sign of the perturbation, we distinguish between `positively-perturbed' (P-), `negatively-perturbed' (N-) and `centered' (C-) algorithms. To avoid the problem of slow simulations of analog circuits, we conduct our comparative study on deep convolutional Hopfield networks (DCHNs)\citep{ernoult2019updates}, an energy-based network that has been emulated on Ising machines \citep{laydevant2023training}, and for which simulations are faster and previously demonstrated to scale to tasks such as CIFAR-10 \citep{laborieux2021scaling} and Imagenet 32x32 \citep{laborieux2022holomorphic}.
Our contributions include the following:

\begin{itemize}
\item We train DCHNs with each of the seven EBL algorithms on five vision tasks: MNIST, Fashion-MNIST, SVHN, CIFAR-10 and CIFAR-100. We find that all these algorithms perform well on MNIST, but as the difficulty of the task increases, important behavioural differences start to emerge. Perhaps counter-intuitively, we find that N-type algorithms outperform P-type algorithms by a large margin on most tasks. The C-EP algorithm emerges as the best performing one, outperforming the other six algorithms on the three hardest tasks (SVHN, CIFAR-10 and CIFAR-100).
\item We state novel theoretical results for EP, adapted from those of `agnostic EP' \citep{scellier2022agnostic}, that support our empirical findings. While EP is often presented as an algorithm that approximates gradient descent on the cost function \citep{scellier2017equilibrium,laborieux2021scaling}, we provide a more precise and stronger statement: EP performs (exact) gradient descent on a surrogate function that approximates the (true) cost function. The surrogate function of N-EP is an upper bound of the cost function, whereas the one of P-EP is a lower bound (Theorem~\ref{thm:pos-neg-ep}). Moreover, the surrogate function of C-EP approximates the true cost function at the second order in the perturbation strength (Theorem~\ref{thm:cen-ep}), whereas the ones of P-EP and N-EP are first-order approximations.
\item We achieve state-of-the-art DCHN simulations on all five datasets, both in terms of performance (accuracy) and speed. For example, a full run of 100 epochs on CIFAR-10 yields a test error rate of 10.4\% and takes 3 hours 18 minutes on a single A100 GPU ; we show that this is 13.5x faster than the simulations of \citet{laborieux2021scaling} on the same hardware (a A100 GPU). With further training (300 epochs), the test error rate goes down to 9.7\% -- to be compared with 11.4\% reported in \citet{laborieux2022holomorphic}.
Our simulation speedup is enabled thanks to the use of a novel energy minimization procedure for DCHNs based on asynchronous updates and the use of 16 bit precision.
\end{itemize}

We note that our work also bears similarities with the line of works on energy-based models \citep{grathwohl2019your,nijkamp2019learning,du2019implicit,geng2021bounds}. Like our approach, these studies involve the minimization of an energy function within the activation space (or input space) of a network. However, unlike our approach, they do not typically exclude the use of the backpropagation algorithm, employing it not only to compute the parameter gradients, but also to execute gradient descent within the network's activation space (or input space). In contrast, the primary motivation of our work is to eliminate the need for backpropagation, and to perform inference and learning by leveraging locally computed quantities, with the long term goal of building energy-efficient processors dedicated to model optimization. Hence, our motivation diverges significantly from traditional `energy-based models'.

\begin{figure}[h!]
\begin{center}
\includegraphics[width=0.72\textwidth]{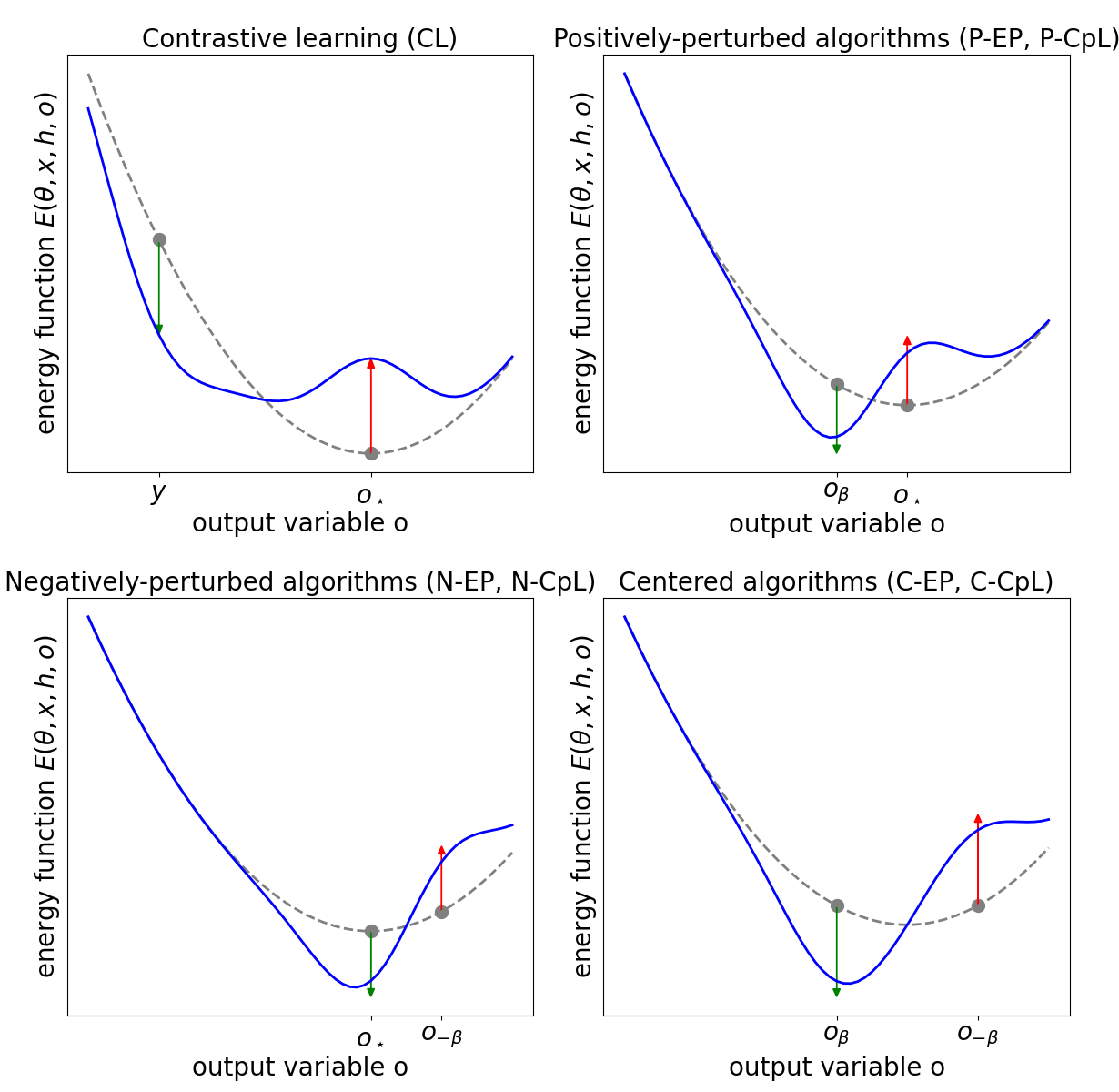}
\end{center}
\caption{Cartoon illustrating the seven energy-based learning (EBL) algorithms: contrastive learning (CL), positively-perturbed algorithms (P-), negatively-perturbed algorithms (N-) and centered algorithms (C-). EP and CpL stand for equilibrium propagation and coupled learning, respectively. The desired output is $y$. The model prediction is $o_\star$ i.e. the output configuration minimizing the energy function. The strength of the perturbation is $\beta$. A positive perturbation pulls the model output ($o_\beta$) towards $y$. A negative perturbation pushes the model output ($o_{-\beta}$) away from $y$. Arrows indicate the weight update: green (resp. red) arrows decrease (resp. increase) the energy value of the corresponding configuration.
}
\label{fig:main}
\end{figure}

\section{Energy-based learning algorithms}
\label{sec:learning-algorithms}

This section introduces the concepts and notations, presents the seven energy-based learning algorithms and states the theoretical results.

We consider the setting of image classification. In this setting, an energy-based learning (EBL) model is composed of an input variable ($x$), a parameter variable ($\theta$) a hidden variable ($h$) and an output variable ($o$). A scalar function $E$ called \textit{energy function} assigns to each tuple $(\theta,x,h,o)$ a real number $E(\theta,x,h,o)$. Given $\theta$ and $x$, among all possible configurations $(h,o)$, the effective configuration of the model is the equilibrium state (or steady state), denoted $(h_\star,o_\star)$ which is \textit{implicitly} defined as a minimum of the energy function,
\begin{equation}
    \label{eq:free-state}
	(h_\star,o_\star) := \underset{(h,o)}{\arg \min} \; E(\theta,x,h,o).
\end{equation}
The equilibrium value (or steady state value) of output variables, $o_\star$, represents a prediction of the model, which we also denote $o(\theta,x) = o_\star$ to emphasize that it depends on the input $x$ and the model parameter $\theta$.

The goal of training an EBL model is to adjust $\theta$ so that, for any input $x$, the output $o(\theta,x)$ coincides with a desired output $y$ (the label associated to $x$). We refer to an algorithm for training an EBL model as an \textit{EBL algorithm}.

\subsection{Contrastive learning}
\label{sec:cl}

Contrastive learning (CL) is the earliest EBL algorithm \citep{movellan1991contrastive}. The CL algorithm proceeds in two phases. In the first phase, input variables $x$ are clamped, while the hidden and output variables are free to stabilize to the energy minimum $(h_\star,o_\star)$ as in \eqref{eq:free-state}. In the second phase, the output variables $o$ are now also clamped to the desired output $y$, and the hidden variables $h$ are free to stabilize to a second energy minimum, denoted $h_\star^{\rm CL}$, characterized by
\begin{equation}
	h_\star^{\rm CL} := \underset{h}{\arg \min} \; E(\theta,x,h,y).
\end{equation}
The contrastive learning rule for the model parameters reads
\begin{equation}
    \label{eq:contrastive-learning-rule}
    \Delta^{\rm CL} \theta = \eta \left( \frac{\partial E}{\partial \theta} \left( \theta,x,h_\star,o_\star \right) - \frac{\partial E}{\partial \theta} \left( \theta,x,h_\star^{\rm CL},y \right) \right),
\end{equation}
where $\eta$ is a learning rate. The CL rule is illustrated in Figure~\ref{fig:main}.

\subsection{Equilibrium propagation}
\label{sec:eqprop-algo}

Equilibrium propagation (EP) is another EBL algorithm \citep{scellier2017equilibrium}. One notable difference in EP is that one explicitly introduces a cost function $C(o,y)$ that represents the discrepancy between the output $o$ and desired output $y$. In its original formulation, EP is a variant of CL which also consists of contrasting two states. In the first phase, similar to CL, input variables are clamped while hidden and output variables are free to settle to the free state $(h_\star,o_\star)$ characterized by \eqref{eq:free-state}. In the second phase, in contrast with CL, EP proceeds by only perturbing (or \textit{nudging}) the output variables rather than clamping them. This is achieved by augmenting the model's energy by a term $\beta C(o,y)$, where $\beta \in \mathbb{R}$ is a scalar -- the \textit{nudging parameter}. The model settles to another equilibrium state, the perturbed state, characterized by
\begin{equation}
    \label{eq:nudged-state}
	(h_\beta^{\rm EP},o_\beta^{\rm EP}) = \underset{(h,o)}{\arg \min} \; \left[ E(\theta,x,h,o) + \beta C(o,y) \right].
\end{equation}
In its classic form, the learning rule of EP is similar to CL:
\begin{equation}
    \label{eq:eqprop-learning-rule}
    \Delta^{\rm EP} \theta = \frac{\eta}{\beta} \left( \frac{\partial E}{\partial \theta} \left( \theta,x,h_\star,o_\star \right) - \frac{\partial E}{\partial \theta} \left( \theta,x,h_\beta^{\rm EP},o_\beta^{\rm EP} \right) \right).
\end{equation}
The EP learning rule \eqref{eq:eqprop-learning-rule} comes in two variants depending on the sign of $\beta$. In \citet{scellier2017equilibrium}, EP is introduced in the variant with $\beta>0$, which we call here \textit{positively-perturbed} EP (P-EP). In this work, we introduce the variant with $\beta<0$, which we call \textit{negatively-perturbed} EP (N-EP).
\footnote{Another variant of EP proposed in \citet{scellier2017equilibrium} combines N-EP and P-EP, where the sign of the nudging parameter $\beta$ is chosen at random for each example in the training set. However, N-EP wasn't used and tested as is.}
We also consider the variant of EP introduced by \citet{laborieux2021scaling} whose learning rule reads
\begin{equation}
    \label{eq:eqprop-centered}
\Delta^{\rm C-EP} \theta = \frac{\eta}{2\beta} \left( \frac{\partial E}{\partial \theta} \left( \theta,x,h_{-\beta}^{\rm EP},o_{-\beta}^{\rm EP} \right) - \frac{\partial E}{\partial \theta} \left( \theta,x,h_\beta^{\rm EP},o_\beta^{\rm EP} \right) \right),
\end{equation}
which we call \textit{centered} EP (C-EP). The EP learning rules are illustrated in Figure~\ref{fig:main}.

\subsection{Coupled learning}

Coupled learning (CpL) is another variant of contrastive learning
\citep{stern2021supervised}. In the first phase, similar to CL and EP, input variables are clamped, while hidden and output variables are free to settle to their free state value $(h_\star,o_\star)$ characterized by \eqref{eq:free-state}. In the second phase, output variables are clamped to a weighted mean of $o_\star$ and $y$, and the hidden variables are allowed to settle to their new equilibrium value. Mathematically, the new equilibrium state is characterized by the following formulas, where the weighted mean output ($o_\beta^{\rm CpL}$) is parameterized by a factor $\beta \in \mathbb{R} \setminus \{0\}$:
\begin{equation}
	h_\beta^{\rm CpL} := \underset{h}{\arg \min} \; E(\theta,x,h,o_\beta^{\rm CpL}), \qquad o_\beta^{\rm CpL} := (1-\beta) o_\star +\beta y.
\end{equation}
Similarly to CL and EP, the learning rule of CpL reads
\begin{equation}
    \label{eq:coupled-learning-rule}
    \Delta^{\rm CpL} \theta = \frac{\eta}{\beta} \left( \frac{\partial E}{\partial \theta} \left( \theta,x,h_\star,o_\star \right) - \frac{\partial E}{\partial \theta} \left( \theta,x,h_\beta^{\rm CpL},o_\beta^{\rm CpL} \right) \right).
\end{equation}
In particular, for $\beta=1$, one recovers the contrastive learning algorithm (CL).
In their original formulation, \citet{stern2021supervised} use $\beta>0$ ; here we refer to this algorithm as \textit{positively-perturbed} CpL (P-CpL). Similarly to EP, we also introduce \textit{negatively-perturbed} CpL (N-CpL, with $\beta<0$) as well as \textit{centered} CpL (C-CpL):
\begin{equation}
    \label{eq:coupled-learning-centered}
\Delta^{\rm C-CpL} \theta = \frac{\eta}{2\beta} \left( \frac{\partial E}{\partial \theta} \left( \theta,x,h_{-\beta}^{\rm CpL},o_{-\beta}^{\rm CpL} \right) - \frac{\partial E}{\partial \theta} \left( \theta,x,h_\beta^{\rm CpL},o_\beta^{\rm CpL} \right) \right).
\end{equation}
Figure~\ref{fig:main} also depicts P-CpL, N-CpL and C-CpL.

\subsection{Theoretical results}
\label{sec:theory}

The EBL algorithms presented above have different theoretical properties. We start with CL.

\begin{thm}[Contrastive learning]
\label{thm:contrastive-learning}
The contrastive learning rule \eqref{eq:contrastive-learning-rule} performs one step of gradient descent on the so-called \textit{contrastive function} $\mathcal{L}_{\rm CL}$,
\begin{equation}
    \label{eq:contrastive-function}
    \Delta^{\rm CL} \theta = - \eta \frac{\partial \mathcal{L}_{\rm CL}}{\partial \theta}(\theta,x,y), \qquad \mathcal{L}_{\rm CL}(\theta,x,y) := E \left( \theta,x,h_\star^{\rm CL},y \right) - E \left( \theta,x,h_\star,o_\star \right).
\end{equation}
\end{thm}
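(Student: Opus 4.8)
The plan is to recognize $\mathcal{L}_{\rm CL}$ as a difference of two \emph{optimal-value functions}, each obtained by partially minimizing the energy, and to differentiate them using the envelope theorem (Danskin's theorem). The central observation is that although the minimizers $h_\star^{\rm CL}$, $h_\star$ and $o_\star$ all depend on $\theta$ (through $x$, $y$ fixed), their implicit $\theta$-dependence contributes nothing to the gradient of $\mathcal{L}_{\rm CL}$, precisely because each is a stationary point of the energy in the corresponding variables.

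First I would differentiate the clamped term $E(\theta, x, h_\star^{\rm CL}, y)$. Writing $h_\star^{\rm CL} = h_\star^{\rm CL}(\theta)$ and applying the chain rule gives $\frac{\partial E}{\partial\theta} + \frac{\partial E}{\partial h}\cdot\frac{\partial h_\star^{\rm CL}}{\partial\theta}$, all evaluated at $(\theta, x, h_\star^{\rm CL}, y)$. Since $h_\star^{\rm CL}$ minimizes $E(\theta, x, \cdot, y)$, the first-order condition $\frac{\partial E}{\partial h}(\theta, x, h_\star^{\rm CL}, y) = 0$ annihilates the second term, leaving just $\frac{\partial E}{\partial\theta}(\theta, x, h_\star^{\rm CL}, y)$.

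Next I would treat the free term $E(\theta, x, h_\star, o_\star)$ identically, except that the minimization is over \emph{both} $h$ and $o$, so the chain rule produces two implicit-dependence terms, $\frac{\partial E}{\partial h}\cdot\frac{\partial h_\star}{\partial\theta}$ and $\frac{\partial E}{\partial o}\cdot\frac{\partial o_\star}{\partial\theta}$. Both vanish by the joint stationarity conditions $\frac{\partial E}{\partial h} = 0$ and $\frac{\partial E}{\partial o} = 0$ at $(h_\star, o_\star)$, giving $\frac{\partial}{\partial\theta}E(\theta, x, h_\star, o_\star) = \frac{\partial E}{\partial\theta}(\theta, x, h_\star, o_\star)$. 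Subtracting the two results yields $\frac{\partial\mathcal{L}_{\rm CL}}{\partial\theta} = \frac{\partial E}{\partial\theta}(\theta, x, h_\star^{\rm CL}, y) - \frac{\partial E}{\partial\theta}(\theta, x, h_\star, o_\star)$, and multiplying by $-\eta$ reproduces the learning rule \eqref{eq:contrastive-learning-rule} exactly, which is the claim.

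The main obstacle is not the algebra but the \emph{regularity} needed to legitimately invoke the envelope theorem: one must ensure the minimizers are well-defined and vary differentiably with $\theta$. I would address this with the implicit function theorem, for which it suffices that $E$ be twice continuously differentiable and that its Hessian with respect to the free variables be nonsingular (e.g. positive definite at an isolated local minimum), guaranteeing both that the stationarity conditions hold and that $h_\star^{\rm CL}(\theta)$, $h_\star(\theta)$, $o_\star(\theta)$ are smooth. Under these standard assumptions the cancellations above are rigorous; without them the gradient $\frac{\partial\mathcal{L}_{\rm CL}}{\partial\theta}$ need not even exist, so flagging this hypothesis is the only delicate point.
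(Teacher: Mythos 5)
Your proof is correct and is essentially the same argument the paper relies on: although the main text defers Theorem~\ref{thm:contrastive-learning} to \citet{movellan1991contrastive}, the identical envelope-theorem computation (chain rule plus first-order stationarity conditions annihilating the implicit $\theta$-dependence of the minimizers) is exactly what the paper carries out in Appendix~\ref{sec:proofs} for EP (proof of Theorem~\ref{thm:eqprop-learning-rules}) and in Appendix~\ref{sec:counter-example}, where Theorem~\ref{thm:contrastive-learning} is recovered as the $\beta=1$ case of the coupled-learning derivative calculation, the clamped term losing its $\frac{\partial E}{\partial o}$ contribution because the output slot is held fixed at $y$. Your closing remark on regularity (twice differentiability and a nonsingular Hessian so the minimizers vary smoothly) flags a hypothesis the paper leaves implicit, but it does not change the substance of the argument.
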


Theorem~\ref{thm:contrastive-learning} is proved in \citet{movellan1991contrastive}. However, it is not clear that the contrastive function $\mathcal{L}_{\rm CL}$ has the desirable properties of an objective function from a machine learning perspective.

The equilibrium propagation (EP) learning rules have better theoretical properties. EP is often presented as an algorithm that approximates the gradient of the cost function, e.g. in \citet{scellier2017equilibrium,laborieux2021scaling}. In this work, we provide a more precise and stronger statement: EP computes the exact gradient of a surrogate function that approximates the cost function. Moreover, in N-EP, the surrogate function is an upper bound of the cost function (Theorem~\ref{thm:pos-neg-ep}), and in C-EP, the surrogate function approximates the cost function at the second order in $\beta$ (Theorem~\ref{thm:cen-ep}). Theorems~\ref{thm:pos-neg-ep} and ~\ref{thm:cen-ep} are adapted from \citet{scellier2022agnostic} \footnote{In `agnostic equilibrium propagation' (AEP) \citep{scellier2022agnostic}, the function $\mathcal{L}_\beta^{\rm EP}$ is proved to be a Lyapunov function for AEP, but AEP does not perform exact gradient descent on $\mathcal{L}_\beta^{\rm EP}$. In contrast, EP performs exact gradient descent on $\mathcal{L}_\beta^{\rm EP}$, but $\mathcal{L}_\beta^{\rm EP}$ is not a Lyapunov function for EP.} -- see Appendix~\ref{sec:proofs} for proofs.

\begin{thm}[Equilibrium propagation]
\label{thm:pos-neg-ep}
There exists some function $\mathcal{L}_\beta^{\rm EP}$ such that the learning rule \eqref{eq:eqprop-learning-rule} performs one step of gradient descent on $\mathcal{L}_\beta^{\rm EP}$:
\begin{equation}
    \Delta^{\rm EP} \theta = - \eta \frac{\partial \mathcal{L}_\beta^{\rm EP}}{\partial \theta}(\theta,x,y).
\end{equation}
The function $\mathcal{L}_\beta^{\rm EP}$ is a lower bound of the `true' cost function if $\beta > 0$ (P-EP), and an upper bound if $\beta < 0$ (N-EP), i.e.
\begin{equation}
\mathcal{L}_\beta^{\rm EP}(\theta,x,y) \leq C \left( o(\theta,x),y \right) \leq \mathcal{L}_{-\beta}^{\rm EP}(\theta,x,y), \qquad \forall \beta > 0,
\end{equation}
where $o(\theta,x) = o_\star$ is the free equilibrium value of output variables given $\theta$ and $x$, as in \eqref{eq:free-state}. Furthermore, $\mathcal{L}_\beta^{\rm EP}$ approximates the `true' cost function up to $O(\beta)$ when $\beta \to 0$, 
\begin{equation}
\mathcal{L}_\beta^{\rm EP}(\theta,x,y) = C \left( o(\theta,x),y \right) + O(\beta).
\end{equation}
\end{thm}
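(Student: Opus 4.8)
The plan is to introduce the \emph{value function} (the minimum of the augmented energy)
\begin{equation*}
F(\theta,x,y,\beta) := \min_{(h,o)} \left[ E(\theta,x,h,o) + \beta C(o,y) \right],
\end{equation*}
and to take $\mathcal{L}_\beta^{\rm EP}$ to be the normalized increment $\frac{1}{\beta}\left( F(\theta,x,y,\beta) - F(\theta,x,y,0) \right)$. The entire argument then reduces to two structural facts about $F$: an envelope-theorem identity for its partial derivatives, and its concavity in $\beta$.

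First I would establish the envelope identities. Assuming the minimizer in the definition of $F$ is unique and varies smoothly with $(\theta,\beta)$ (the standard implicit-function-theorem regularity already invoked to define the equilibrium states), the envelope theorem gives $\frac{\partial F}{\partial \theta} = \frac{\partial E}{\partial \theta}(\theta,x,h_\beta^{\rm EP},o_\beta^{\rm EP})$ and $\frac{\partial F}{\partial \beta} = C(o_\beta^{\rm EP},y)$, since the derivative of the inner objective with respect to the minimizing arguments vanishes at the optimum. Specializing at $\beta = 0$ recovers the free state and yields $\frac{\partial F}{\partial \theta}(\cdot,0) = \frac{\partial E}{\partial \theta}(\theta,x,h_\star,o_\star)$ together with $\frac{\partial F}{\partial \beta}(\cdot,0) = C(o_\star,y) = C(o(\theta,x),y)$. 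With these, the EP learning rule \eqref{eq:eqprop-learning-rule} becomes exactly $\frac{\eta}{\beta}\left( \frac{\partial F}{\partial \theta}(\cdot,0) - \frac{\partial F}{\partial \theta}(\cdot,\beta) \right) = -\eta\,\frac{\partial \mathcal{L}_\beta^{\rm EP}}{\partial \theta}$, proving the first claim.

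Next I would derive the bounds from concavity. Because $F$ is a pointwise minimum over $(h,o)$ of functions that are affine (indeed linear) in $\beta$, it is concave in $\beta$; hence it lies below its tangent line at $\beta = 0$, i.e. $F(\cdot,\beta) \leq F(\cdot,0) + \beta\,C(o(\theta,x),y)$ for all $\beta$. Dividing by $\beta$ while tracking signs: for $\beta > 0$ this gives $\mathcal{L}_\beta^{\rm EP} \leq C(o(\theta,x),y)$ (lower bound), and applying the same inequality at $-\beta$ and dividing by the negative quantity $-\beta$ flips the direction to give $\mathcal{L}_{-\beta}^{\rm EP} \geq C(o(\theta,x),y)$ (upper bound); together these are the chain of inequalities in the statement. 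Finally, the first-order approximation follows from the Taylor expansion $F(\cdot,\beta) = F(\cdot,0) + \beta\,C(o(\theta,x),y) + O(\beta^2)$, so that $\mathcal{L}_\beta^{\rm EP} = C(o(\theta,x),y) + O(\beta)$.

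The main obstacle is the regularity needed for the envelope step: both $F$ and the equilibrium map $\beta \mapsto (h_\beta^{\rm EP},o_\beta^{\rm EP})$ must be differentiable, which requires the Hessian of the augmented energy to be nondegenerate at the minimizer (ensuring a unique, smoothly varying minimizer). Notably, the concavity — and hence the bounds — is robust and holds without any smoothness, since a pointwise minimum of affine functions is always concave; the delicate part is purely the identification of $\frac{\partial F}{\partial \beta}(\cdot,0)$ with $C(o_\star,y)$ and the differentiability of $F$ in $\theta$, which are exactly the implicit-function-theorem assumptions already in force for the equilibrium states to be well defined.
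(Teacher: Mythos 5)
Your proposal is correct and takes essentially the same route as the paper's proof in Appendix~\ref{sec:proofs}: the same value function $F(\beta,\theta)=\inf_{h,o}\{E+\beta C\}$, the same surrogate $\mathcal{L}_\beta^{\rm EP}=(F(\beta,\theta)-F(0,\theta))/\beta$, the envelope-theorem identities for both $\partial F/\partial\theta$ and $\partial F/\partial\beta$ (proved there via the chain rule and the first-order stationarity conditions), and a Taylor expansion of $F$ in $\beta$ for the $O(\beta)$ approximation. The only cosmetic difference is in the bounds: you use concavity of $F$ in $\beta$ together with the tangent line at $\beta=0$ (which, as stated, still needs the envelope identity to identify the slope), whereas the paper obtains the identical inequality $F(\beta,\theta)\leq F(0,\theta)+\beta\,C(o(\theta,x),y)$ by simply restricting the infimum to $o=o(\theta,x)$ — a step requiring no smoothness at all — and records your concavity observation as a remark immediately after the proof.
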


\begin{thm}[Centered EP]
\label{thm:cen-ep}
There exists some function $\mathcal{L}_{-\beta;+\beta}^{\rm EP}$ such that the learning rule \eqref{eq:eqprop-centered} performs one step of gradient descent on $\mathcal{L}_{-\beta;+\beta}^{\rm EP}$:
\begin{equation}
    \Delta^{\rm C-EP} \theta = - \eta \frac{\partial \mathcal{L}_{-\beta;+\beta}^{\rm EP}}{\partial \theta}(\theta,x,y).
\end{equation}
Moreover, the function $\mathcal{L}_{-\beta;+\beta}^{\rm EP}$ approximates the `true' cost function up to $O(\beta^2)$ when $\beta \to 0$,
\begin{equation}
\mathcal{L}_{-\beta;+\beta}^{\rm EP}(\theta,x,y) = C \left( o(\theta,x),y \right) + O(\beta^2).
\end{equation}
\end{thm}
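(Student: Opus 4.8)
The plan is to route both claims through the \emph{equilibrium free energy}
\[
F_\beta(\theta,x,y) := \min_{(h,o)} \left[ E(\theta,x,h,o) + \beta\, C(o,y) \right] = E\!\left(\theta,x,h_\beta^{\rm EP},o_\beta^{\rm EP}\right) + \beta\, C\!\left(o_\beta^{\rm EP}, y\right),
\]
in exact parallel with the construction underlying Theorem~\ref{thm:pos-neg-ep} (there $\mathcal{L}_\beta^{\rm EP} = (F_\beta - F_0)/\beta$). First I would invoke the envelope theorem: because $(h_\beta^{\rm EP},o_\beta^{\rm EP})$ minimizes the bracketed quantity, its partial derivatives with respect to the state variables vanish, and since $C$ does not depend on $\theta$, differentiating the minimized value leaves only the explicit term,
\[
\frac{\partial F_\beta}{\partial \theta}(\theta,x,y) = \frac{\partial E}{\partial \theta}\!\left(\theta,x,h_\beta^{\rm EP},o_\beta^{\rm EP}\right).
\]
Substituting this identity at $+\beta$ and $-\beta$ into the C-EP rule~\eqref{eq:eqprop-centered} rewrites it as $\Delta^{\rm C-EP}\theta = -\eta\,\frac{\partial}{\partial\theta}\!\left[(F_\beta - F_{-\beta})/(2\beta)\right]$, which identifies the surrogate $\mathcal{L}_{-\beta;+\beta}^{\rm EP} = (F_\beta - F_{-\beta})/(2\beta)$ and settles the exact gradient-descent claim. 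Equivalently, this exhibits C-EP's surrogate as the average $\tfrac12\!\left(\mathcal{L}_{+\beta}^{\rm EP} + \mathcal{L}_{-\beta}^{\rm EP}\right)$ of the one-sided surrogates from Theorem~\ref{thm:pos-neg-ep}.

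For the second-order approximation I would apply the envelope theorem a second time, now differentiating $F_\beta$ in $\beta$: the state-variable terms again vanish at the minimizer, leaving $\partial F_\beta/\partial\beta = C(o_\beta^{\rm EP},y)$, which at $\beta=0$ equals $C(o_\star,y) = C(o(\theta,x),y)$. Taylor-expanding about $\beta=0$ then gives
\[
F_{\pm\beta} = F_0 \pm \beta\, C\!\left(o(\theta,x),y\right) + \tfrac{\beta^2}{2}\left.\frac{\partial^2 F_\beta}{\partial\beta^2}\right|_{\beta=0} + O(\beta^3),
\]
where crucially the quadratic coefficient is identical for $+\beta$ and $-\beta$. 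Forming the central difference, these common even-order terms cancel, $F_\beta - F_{-\beta} = 2\beta\, C(o(\theta,x),y) + O(\beta^3)$, and dividing by $2\beta$ yields $\mathcal{L}_{-\beta;+\beta}^{\rm EP} = C(o(\theta,x),y) + O(\beta^2)$. The key structural point is that the centered scheme is an \emph{antisymmetric} finite difference of $F_\beta$: the $O(\beta)$ error term that survives in each one-sided surrogate $\mathcal{L}_{\pm\beta}^{\rm EP}$ is odd in $\beta$ and is therefore annihilated upon symmetrizing, which is exactly what promotes the first-order accuracy of P-/N-EP to the second-order accuracy claimed here.

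The main obstacle is the rigorous justification of the two envelope-theorem steps rather than the computations themselves. This requires that the energy landscape be smooth and, more delicately, that the free and perturbed minimizers $(h_\beta^{\rm EP},o_\beta^{\rm EP})$ be well defined and depend differentiably on $\theta$ and on $\beta$ in a neighborhood of the relevant equilibria (so that one may differentiate through the $\arg\min$); the Taylor step additionally needs $F_\beta$ to be thrice continuously differentiable in $\beta$ near $0$. I would state these regularity hypotheses explicitly, following \citet{scellier2022agnostic}, after which the remaining differentiations are routine and the result follows from the antisymmetry observation above.
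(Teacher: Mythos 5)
Your proposal is correct and follows essentially the same route as the paper's own proof: the paper likewise defines $F(\beta,\theta)=\inf_{h,o}\{E(\theta,h,o)+\beta C(o)\}$, identifies $\mathcal{L}_{-\beta;+\beta}^{\rm EP}=\bigl(F(\beta,\theta)-F(-\beta,\theta)\bigr)/(2\beta)$, uses the first-order stationarity conditions at the perturbed states (your envelope-theorem steps) to get $\partial F/\partial\theta=\partial E/\partial\theta$ and $\partial F/\partial\beta=C(o_\beta^{\rm EP})$, and obtains the $O(\beta^2)$ accuracy from the cancellation of the even-order Taylor terms in the central difference. Your explicit flagging of the regularity hypotheses (differentiability of the minimizers in $\theta$ and $\beta$) is a point the paper leaves implicit, but it does not change the argument.
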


Finally, the analysis of the coupled learning rules (P-CpL, N-CpL and C-CpL) is more complicated. \citet{stern2021supervised} introduce the following function as a candidate loss function for coupled learning:
\begin{equation}
    \label{eq:effective-loss}
    \mathcal{L}_{\rm CpL}^{(1)}(\theta,x,y) := (y-o(\theta,x))^\top \cdot \frac{\partial^2 G}{\partial o^2}(\theta,x,o(\theta,x)) \cdot (y-o(\theta,x)), \quad G(\theta,x,o) := \min_h E(\theta,x,h,o).
\end{equation}
\citet{stern2021supervised} argue that the P-CpL rule \eqref{eq:coupled-learning-rule} optimizes both $\mathcal{L}_{\rm CpL}$ and the squared error $\mathcal{L}_{\rm MSE}(\theta,x,y) := (o(\theta,x)-y)^2$. Besides, \citet{stern2022physical,stern2023physical} argue that the learning rule \eqref{eq:coupled-learning-rule} performs gradient descent on
\begin{equation}
\mathcal{L}_{\rm CpL}^{(2)} := \frac{1}{\beta} \left( E(\theta,x,h_\beta^{\rm CpL},o_\beta^{\rm CpL}) - E(\theta,x,h_\star,o_\star) \right).
\end{equation}
In Appendix~\ref{sec:counter-example}, we demonstrate that the coupled learning rules \eqref{eq:coupled-learning-rule} and \eqref{eq:coupled-learning-centered} do not optimize$\mathcal{L}_{\rm CpL}^{(1)}$ or $\mathcal{L}_{\rm MSE}$, and do not perform gradient descent on $\mathcal{L}_{\rm CpL}^{(2)}$.

\section{Deep convolutional Hopfield networks}
\label{sec:dchn}

To compare the EBL algorithms presented in section~\ref{sec:learning-algorithms}, we consider the EBL model of \citet{ernoult2019updates}, which we call \textit{deep convolutional Hopfield network} (DCHN). We consider specifically the network architecture of \citet{laborieux2021scaling}.

\paragraph{Network architecture.}
The network has an input layer, four hidden layers, and an output layer. Since we consider classification tasks, the output layer has $M$ units, where $M$ is the number of categories for the task. Successive layers are interconnected by convolutional interactions with kernel size $3\plh3$, padding $1$, and max pooling. Except for the last hidden layer and the output layer, which are interconnected by a dense interaction.

\paragraph{Energy function.}
We denote the state of the network $s=(s_0,s_1,s_2,s_3,s_4,s_5)$, where $x=s_0$ is the input layer, $h=(s_1,s_2,s_3,s_4)$ is the hidden variable and $o=s_5$ is the output variable. The energy function of the network is
\begin{equation}
    \label{eq:dchn-energy}
    E(\theta,s) := \sum_{k=1}^5 \frac{1}{2} \| s_k \|^2 + \sum_{k=1}^4 E_k^{\rm conv}(w_k, s_{k-1}, s_k) + E_5^{\rm dense}(w_5, s_4, s_5) + \sum_{k=1}^5 E_k^{\rm bias}(b_k, s_k),
\end{equation}
where $\theta = \{ w_k, b_k \mid 1 \leq k \leq 5 \}$ is the set of model parameters, and $E_k^{\rm conv}$, $E_k^{\rm dense}$ and $E_k^{\rm bias}$ are energy terms defined as
\begin{equation}
    E_k^{\rm conv} := - s_k\bullet\mathcal{P}\left(w_k\star s_{k-1}\right), \qquad E_k^{\rm dense} := - s_k^\top w_k s_{k-1}, \qquad E_k^{\rm bias} := - b_k^\top s_k.
\end{equation}
Specifically, $E_k^{\rm conv}$ is the energy function of a convolutional interaction between layers $k-1$ and $k$, parameterized by the kernel $w_k$ (the weights), where $\star$ is the convolution operation, $\mathcal{P}$ is the max pooling operation, and $\bullet$ is the scalar product for pairs of tensors. $E_k^{\rm dense}$ is the energy of a dense interaction between layers $k-1$ and $k$, parameterized by the $\dim(s_{k-1}) \times \dim(s_k)$ matrix $w_k$. Finally, $E_k^{\rm bias}$ is the energy term of $b_k$, the bias of layer $k$. 

\paragraph{Cost function.}
For the equilibrium propagation learning rules, we use the squared error cost function $C(o,y) = \| o - y \|^2$, where $o=s_5$ is the output layer and $y$ is the one-hot code of the label (associated to image $x$).

\paragraph{State space.}
Given an input $x$, the steady state of the model is $(h_\star, o_\star) = \underset{(h,o) \in \mathcal{S}}{\arg \min} \; E(\theta,x,h,o)$, where $\mathcal{S}$ is the state space, i.e. the space of configurations for the hidden and output variables. We use $\mathcal{S} = \prod_{k=1}^4 [0,1]^{\dim(s_k)} \times \mathbb{R}$, where $[0,1]$ is the closed interval of $\mathbb{R}$ with bounds $0$ and $1$, and $\dim(s_k)$ is the number of units in layer $k$.

\paragraph{Energy minimization via asynchronous updates.} In order to compute the steady states (free state and perturbed states) of the learning algorithms presented in section~\ref{sec:learning-algorithms}, we use a novel procedure for DCHNs using `asynchronous updates' for the state variables, as opposed to the `synchronous updates' used in other works \citep{ernoult2019updates,laborieux2021scaling,laydevant2021training,laborieux2022holomorphic}. This asynchronous procedure proceeds as follows: at every iteration, we first update the layers of even indices in parallel (the first half of the layers) and then we update the layers of odd indices (the other half of the layers). Updating all the layers once (first the even layers, then the odd layers) constitutes one `iteration'. We repeat as many iterations as necessary until convergence to the steady state. See Appendix \ref{sec:asynchronous-update-scheme} for a detailed explanation of the asynchronous procedure, and for a comparison with the synchronous procedure used in other works. Although we do not have a proof of convergence of this asynchronous procedure for DCHNs, we find experimentally that it converges faster than the synchronous procedure (for which there is no proof of convergence anyway).

\section{Simulations}

\subsection{Comparative study of EBL algorithms}
\label{sec:comparative-study}

\paragraph{Setup.}
We compare with simulations the seven energy-based learning (EBL) algorithms of section~\ref{sec:learning-algorithms} on the deep convolutional Hopfield network (DCHN) of section~\ref{sec:dchn}. To do this, we train a DCHN on MNIST, Fashion-MNIST, SVHN, CIFAR-10 and CIFAR-100 using each of the seven EBL algorithms. We also compare the performance of these algorithms to two baselines: recurrent backpropagation (RBP) and truncated backpropagation (TBP), detailed below. For each simulation, the DCHN is trained for 100 epochs. Each run is performed on a single A100 GPU. A run on MNIST and FashionMNIST takes 3 hours 30 minutes ; a run on SVHN takes 4 hours 45 minutes ; and a run on CIFAR-10 and CIFAR-100 takes 3 hours. All these simulations are performed with the same network using the same initialization scheme and the same hyperparameters. Details of the training experiments are provided in Appendix~\ref{sec:simulation-details} and the results are reported in Table~\ref{table:comparison}.
\footnote{The code is available at \url{https://github.com/rain-neuromorphics/energy-based-learning}}

\paragraph{First baseline: recurrent backpropagation (RBP).}
As the equilibrium state $o_\star$ of an EBL model is defined \textit{implicitly} as the minimum of an energy function, the gradient of the cost $C(o_\star,y)$ can be computed via implicit differentiation, or more specifically via an algorithm called \textit{recurrent backpropagation} \citep{almeida1987learning,pineda1987generalization}.

\paragraph{Second baseline: truncated backpropagation (TBP).}
Our second baseline uses automatic differentiation (i.e. backpropagation) as follows. First, we compute the free state $(h_\star,o_\star)$. Then, starting from the free state $(h_0, o_0) = (h_\star,o_\star)$ we perform $K$ iterations of the fixed point dynamics ; we arrive at $(h_K, o_K)$, which is also equal to the free state. We then compute the gradient of $C(o_K,y)$ with respect to the parameters ($\theta$), without backpropagating through $(h_0, o_0)$. This is a truncated version of backpropagation, where we only backpropagate through the last $K$ iterations but not through the minimization of the energy function. This baseline is also used by \citet{ernoult2019updates}.

\begin{table}[ht!]
  \caption{Results obtained by training the deep convolutional Hopfield network (DCHN) of Section~\ref{sec:dchn} with the EBL algorithms of Section~\ref{sec:learning-algorithms}: contrastive learning (CL), equilibrium propagation (EP) and coupled learning (CpL). EP and CpL are tested in their positively-perturbed (P-), negatively-perturbed (N-) and centered (C-) variants. We also report two baselines: truncated backpropagation (TBP) and recurrent backpropagation (RBP).
  Train and Test refer to the training and test error rates, in \%. For each of these 45 experiments, we perform three runs and report the mean values. See Appendix~\ref{sec:simulation-details} for the complete results with std values. The hyperparameters used for this study are reported in Table~\ref{table:hyperparams}.
  }
  \label{table:comparison}
  \centering
\begin{tabular}{ccccccccccccccc}
    \toprule
    & \multicolumn{2}{c}{MNIST} & \multicolumn{2}{c}{FashionMNIST} & \multicolumn{2}{c}{SVHN} & \multicolumn{2}{c}{CIFAR-10} & \multicolumn{2}{c}{CIFAR-100} \\
    \cmidrule(r){2-3} \cmidrule(r){4-5} \cmidrule(r){6-7} \cmidrule(r){8-9} \cmidrule(r){10-11}
     & Test & Train & Test & Train & Test & Train & Test & Train & Test & Train \\
    \cmidrule(r){1-11}
    TBP & 0.42 & 0.23 & 6.12 & 3.09 & 3.76 & 2.37 & 10.1 & 3.1 & 33.4 & 17.2 \\
    RBP & 0.44 & 0.33 & 6.28 & 3.70 & 3.87 & 3.43 & 10.7 & 5.2 & 34.4 & 18.2 \\
    \cmidrule(r){1-11}
    CL & 0.61 & 2.39 & 10.10 & 15.49 & 6.1 & 15.8 & 31.4 & 45.2 & 71.4 & 88.6 \\
    P-EP & 1.66 & 2.29 & 90.00 & 89.99 & 83.9 & 81.9 & 72.6 & 79.5 & 89.4 & 95.5 \\
    N-EP & \bf{0.42} & 0.19 & \bf{6.22} & 3.87 & 80.4 & 81.1 & 11.9 & 6.2 & 44.7 & 40.1 \\
    C-EP & 0.44 & 0.20 & 6.47 & 4.01 & \bf{3.51} & 3.01 & \bf{11.1} & 5.6 & \bf{37.0} & 26.0 \\
    P-CpL & 0.66 & 0.59 & 64.70 & 65.31 & 40.1 & 50.8 & 46.9 & 57.7 & 77.9 & 91.0 \\
    N-CpL & 0.50 & 0.88 & 6.86 & 6.27 & 80.4 & 81.1 & 13.5 & 10.2 & 51.9 & 50.6 \\
    C-CpL & 0.44 & 0.38 & 6.91 & 5.29 & 4.23 & 5.05 & 14.9 & 14.6 & 46.5 & 37.9 \\
    \bottomrule
\end{tabular}
\end{table}

We draw several lessons from Table~\ref{table:comparison}.

\paragraph{Algorithms perform alike on MNIST.} Little difference is observed in the test performance of the algorithms on MNIST, ranging from 0.42\% to 0.66\% test error rate for six of the seven EBL algorithms. This result confirms the current trend in the literature, often skewed towards simple tasks, which sometimes goes to treat all EBL algorithms as one and the same.

\paragraph{Weak positive perturbations perform worse than strong ones.}
P-EP fails on most datasets (Fashion-MNIST, SVHN, CIFAR-10 and CIFAR-100). P-CpL (the other weakly positively-perturbed algorithm) performs better than P-EP on all tasks, but also fails on CIFAR-100 (91\% training error). It is noteworthy that CL, which employs full clamping to the desired output (i.e. a strong positive perturbation), performs better than these two algorithms on \emph{all tasks}, sometimes by a very large margin (Fashion-MNIST and SVHN).


\paragraph{Negative perturbations yield better results than positive ones.}
On Fashion-MNIST, CIFAR-10 and CIFAR-100, algorithms employing a negative perturbation (N-EP and N-CpL) perform significantly better than those employing a positive perturbation (CL, P-EP and P-CpL). Theorem~\ref{thm:pos-neg-ep} sheds light on why N-EP performs better than P-EP: N-EP optimizes an upper bound of the cost function, whereas P-EP optimizes a lower bound. We note however that on SVHN, the results obtained with N-EP and N-CpL are much worse than CL -- in Appendix~\ref{sec:svhn} we perform additional simulations where we change the weight initialization of the network ; we find that N-EP and N-CpL generally perform much better than CL, P-EP and P-CpL, supporting our conclusion.


\paragraph{Two-sided perturbations (i.e. centered algorithms) yield better results than one-sided perturbations.}
While little difference in performance between the centered (C-EP and C-CpL) and negatively-perturbed (N-EP and N-CpL) algorithms is observed on MNIST, FashionMNIST and CIFAR-10, the centered algorithms unlock training on SVHN and significantly improve the test error rate on CIFAR-100 (by $\geq 5.4 \%$). Theorem~\ref{thm:cen-ep} sheds light on why C-EP performs better than N-EP: the loss function $\mathcal{L}_{-\beta;+\beta}^{\rm EP}$ of C-EP better approximates the cost function (up to $O(\beta^2)$) than the loss function $\mathcal{L}_{-\beta}^{\rm EP}$ of N-EP (up to $O(\beta)$).


\paragraph{The EP perturbation method yields better results than the one of CpL.}
While little difference in performance is observed between C-EP and C-CpL on MNIST and FashionMNIST, C-EP significantly outperforms C-CpL on CIFAR-10 and CIFAR-100, both in terms of training error rate ($\geq 11.9\%$ difference) and test error rate ($\geq 3.8\%$ difference). The same observations hold between N-EP and N-CpL. We also note that the CpL learning rules seem to be less theoretically grounded than the EP learning rules, as detailed in Appendix~\ref{sec:counter-example}.



\subsection{State-of-the-art DCHN simulations (performance and speed)}

The comparative study conducted in the previous subsection highlights C-EP as the best EBL algorithm among the seven algorithms considered in this work. Using C-EP, we then perform additional simulations on MNIST, CIFAR-10 and CIFAR-100, where we optimize the hyperparameters of training (weight initialization, initial learning rates, number of iterations, value of the nudging parameter and weight decay) to yield the best performance. We use the hyperparameters reported in Table~\ref{table:hyperparams} (Appendix~\ref{sec:simulation-details}). We report in Table~\ref{table:best} our fastest simulations (100 epochs) as well as the ones that yield the lowest test error rate (300 epochs), averaged over three runs each. We also compare our results with existing works on deep convolutional Hopfield networks (DCHNs).

\begin{table}[ht!]
  \caption{
  We achieve state-of-the-art results (both in terms of speed and accuracy) with C-EP-trained DCHNs on MNIST, CIFAR-10 and CIFAR-100. The results are averaged over 3 runs, and compared with the existing literature on DCHNs. Top1 (resp. Top5) refers to the test error rate for the Top1 (resp. Top5) classification task. Wall-clock time (WCT) is reported as HH:MM. The hyperparameters used for these simulations are reported in Table~\ref{table:hyperparams} (Appendix~\ref{sec:simulation-details}).
  }
  \label{table:best}
  \centering
\begin{tabular}{ccccccccc}
\toprule
& \multicolumn{2}{c}{MNIST} & \multicolumn{2}{c}{CIFAR-10} & \multicolumn{3}{c}{CIFAR-100} \\
\cmidrule(r){2-3} \cmidrule(r){4-5} \cmidrule(r){6-8}
& Top1 & WCT & Top1 & WCT & Top1 & Top5 & WCT \\
\cmidrule(r){1-8}
\citet{ernoult2019updates} & 1.02 & 8:58 &  &  &  &  &  \\
\citet{laborieux2021scaling} &  &  & 11.68 & N.A. &  &  &  \\
\citet{laydevant2021training} & 0.85 & N.A. & 13.78 & $\sim$ 120:00 &  &  &  \\
\citet{luczak2022neurons} &  &  & 20.03 & N.A. &  &  &  \\
\citet{laborieux2022holomorphic} &  &  & 11.4 & $\sim$ 24:00 & 38.4 & \bf{14.0} & $\sim$ 24:00 \\
\cmidrule(r){1-8}
This work (100 epochs) & \bf{0.44} & \bf{3:30} & 10.40 & \bf{3:18} & 34.2 & 14.2 & \bf{3:02} \\
This work (300 epochs) &  &  & \bf{9.70} & 9:54 & \bf{31.6} & 14.1 & 9:04 \\
\bottomrule
\end{tabular}
\end{table}

Table~\ref{table:best} shows that we achieve better simulation results than existing works on DCHNs on all three datasets, both in terms of performance and speed. For instance, our 100-epoch simulations on CIFAR-10 take 3 hours 18 minutes, which is 7 times faster than those reported in \citep{laborieux2022holomorphic} (1 day), and 36 times faster than those reported in \citet{laydevant2021training} (5 days), and our 300-epoch simulations on CIFAR-10 yield 9.70\% test error rate, which is significantly lower than \citep{laborieux2022holomorphic} (11.4\%). On CIFAR-100, it is interesting to note that our 300-epoch simulations yield a significantly better Top-1 error rate than \citet{laborieux2022holomorphic} (31.6\% vs 38.4\%), but a slightly worse Top-5 error rate (14.1\% vs 14.0\%) ; we speculate that this might be due to our choice of using the mean-squared error instead of the cross-entropy loss used in  \citet{laborieux2022holomorphic}. Since no earlier work on DCHNs has performed simulations on Fashion-MNIST and SVHN, our results reported in Table~\ref{table:comparison} are state-of-the-art on these datasets as well.

Our important speedup comes from our novel energy-minimization procedure based on ``asynchronous updates'', combined with 60 iterations at inference (free phase) and the use of 16-bit precision. In comparison, \citet{laborieux2021scaling} used ``synchronous updates'' with 250 iterations and 32-bit precision. We show in Appendix~\ref{sec:asynchronous-update-scheme} that these changes result in a 13.5x speedup on the same device (a A100 GPU) without degrading the performance (test error rate).

We also stress that there still exists an important gap between our SOTA DCHN results and SOTA computer vision models. For example, \citet{dosovitskiy2020image} report
0.5\% test error rate on CIFAR-10, and 5.45\% top-1 test error rate on CIFAR-100.

\section{Conclusion}

Our comparative study of energy-based learning (EBL) algorithms delivers a few key take-aways: 1) simple tasks such as MNIST may suggest that all EBL algorithms work equally well, but more difficult tasks such as CIFAR-100 magnify small algorithmic differences, 2) negative perturbations yield better results than positive ones, 3) two-sided (centered) perturbations perform better than one-sided perturbations, and 4) the perturbation technique of equilibrium propagation yields better results than the one of coupled learning. These findings highlight the centered variant of equilibrium propagation (C-EP) as the best EBL algorithm among those considered in the present work, outperforming the second-best algorithm on CIFAR-100 (N-EP) by a significant margin. Our results also challenge some common views. First, it is noteworthy that CL (which uses clamping to the desired output, i.e. a strong positive perturbation) yields better results than P-CpL and P-EP (which use weak positive perturbations): this is contrary to the prescription of \citet{stern2021supervised} to use small positive perturbations, and opens up questions in the strong perturbation regime \citep{meulemans2022least}.
Second, it is interesting to note that CL \citep{movellan1991contrastive,baldi1991contrastive}, EP \citep{scellier2017equilibrium} and CpL \citep{stern2021supervised} were originally introduced around the same intuition of bringing the model output values closer to the desired outputs ; the fact that the four best-performing algorithms of our study (C-EP, N-EP, C-CpL and N-CpL) all require a negative perturbation suggests on the contrary that the crucial part in EBL algorithms is not to \textit{pull} the model outputs towards the desired outputs, but to \textit{push away} from the desired outputs. Third, \citet{laborieux2021scaling} explained the very poor performance of P-EP by it approximating the gradient of the cost function up to $O(\beta)$, but our observation that N-EP considerably outperforms P-EP while also possessing a bias of order $O(\beta)$ shows that this explanation is incomplete ; Theorem~\ref{thm:pos-neg-ep} provides a complementary explanation: P-EP optimizes a lower bound of the cost function, whereas N-EP optimizes an upper bound.

Our work also establishes new state-of-the-art results for deep convolutional Hopfield networks (DCHNs) on all five datasets, both in terms of performance (accuracy) and speed. In particular, thanks to the use of a novel ``asynchronous'' energy-minimization procedure for DCHNs, we manage to reduce the number of iterations required to converge to equilibrium to 60 - compared to 250 iterations used in \citet{laborieux2021scaling}. Combined with the use of 16-bit precision (instead of 32-bit), this leads our simulations to be 13.5 times faster than those of \citet{laborieux2021scaling} when run on the same hardware (a A100 GPU).

While our theoretical and simulation results seem conclusive, we also stress some of the limiting aspects of our study. First, our comparative study of EBL algorithms was conducted only on the DCHN model ; further studies will be required to confirm whether or not the conclusions that we have drawn extend to a broader class of models such as resistive networks \citep{kendall2020training} and elastic networks \citep{stern2021supervised}. Second, in our simulations of DCHNs, the performance of the different EBL methods depends on various hyperparameters such as the perturbation strength ($\beta$), the number of iterations performed to converge to steady state, the weight initialization scheme, the learning rates, the weight decay, etc. Since a different choice of these hyperparameters can lead to different performance, it is not excluded that a deeper empirical study could lead to slightly different conclusions.

Ultimately, while our work is concerned with \textit{simulations}, EBL algorithms are most promising for training actual analog machines ; we believe that our findings can inform the design of such hardware as well \citep{dillavou2022demonstration,yi2023activity,dillavou2023circuits}. Finally, we contend that our theoretical insights and experimental findings may also help better understand or improve novel EBL algorithms \citep{anisetti2022frequency,laborieux2022holomorphic,hexner2023training,anisetti2023learning}, and more generally guide the design of better learning algorithms for bi-level optimization problems \citep{zucchet2022beyond}, including the training of Lagrangian systems \citep{kendall2021gradient,scellier2021deep}, meta-learning \citep{zucchet2022contrastive,park2022predicting}, direct loss minimization \citep{song2016training} and predictive coding \citep{whittington2019theories,millidge2022backpropagation}.

\begin{ack}
The authors thank Mohammed Fouda, Ludmila Levkova, Nicolas Zucchet, Axel Laborieux, Nachi Stern, Sam Dillavou and Arvind Murugan for discussions. This work was funded by Rain AI. Rain AI has an interest in commercializing technologies based on brain-inspired learning algorithms.
\end{ack}

\bibliographystyle{abbrvnat}
\bibliography{biblio}

\clearpage
\appendix

\newpage
\section{Loss functions of the equilibrium propagation learning rules}
\label{sec:proofs}

The aim of this section is to prove Theorems~\ref{thm:pos-neg-ep} and ~\ref{thm:cen-ep}. These theorems are direct consequences of Theorems~\ref{thm:lyapunov} and \ref{thm:eqprop-learning-rules} below, therefore the purpose of this section is to state and prove Theorems~\ref{thm:lyapunov} and \ref{thm:eqprop-learning-rules}.

In this section, the input $x$ and label $y$ are fixed, so we omit them in the notation for simplicity. For fixed $\theta$, we define for any value of $\beta$ the minimum of the augmented energy when $h$ and $o$ are free:
\begin{equation}
\label{eq:augmented-energy}
F(\beta,\theta) :=\inf_{h,o} \left\{E(\theta,h,o)+ \beta C(o)\right\}.
\end{equation}
We further define the functions
\begin{equation}
\label{eq:ep-loss}
\mathcal{L}_\beta^{\rm EP} \left( \theta \right) := \frac{F(\beta,\theta) - F(0,\theta)}{\beta}, \qquad \beta \in \mathbb{R} \setminus \{ 0 \},
\end{equation}
and
\begin{equation}
\label{eq:cep-loss}
\mathcal{L}_{-\beta;+\beta}^{\rm EP} \left( \theta \right) := \frac{F(\beta,\theta) - F(-\beta,\theta)}{2\beta}, \qquad \beta > 0.
\end{equation}
We also recall the definition of the steady state values of output variables:
\begin{equation}
o(\theta) := \underset{o}{\arg \min} \min_h E(\theta,h,o).
\end{equation}
Finally, we recall that the equilibrium propagation (EP) perturbed state \eqref{eq:nudged-state} is defined as
\begin{equation}
(h_\beta^{\rm EP},o_\beta^{\rm EP}) := \arg \min_{(h,o)} \; \left\{E(\theta,h,o)+ \beta C(o)\right\}
\end{equation}
and that the EP learning rules \eqref{eq:eqprop-learning-rule} and \eqref{eq:eqprop-centered} read
\begin{equation}
    \Delta^{\rm EP} \theta := \frac{\eta}{\beta} \left( \frac{\partial E}{\partial \theta} \left( \theta,h_0^{\rm EP},o_0^{\rm EP} \right) - \frac{\partial E}{\partial \theta} \left( \theta,h_\beta^{\rm EP},o_\beta^{\rm EP} \right) \right)
\end{equation}
and
\begin{equation}
\Delta^{\rm C-EP} \theta := \frac{\eta}{2\beta} \left( \frac{\partial E}{\partial \theta} \left( \theta,h_{-\beta}^{\rm EP},o_{-\beta}^{\rm EP} \right) - \frac{\partial E}{\partial \theta} \left( \theta,h_\beta^{\rm EP},o_\beta^{\rm EP} \right) \right).
\end{equation}

\begin{thm}
\label{thm:lyapunov}
The derivative of the function $\beta \to F(\beta,\theta)$ at $\beta=0$ is equal to $C(o(\theta))$. In particular, as $\beta \to 0$, we have
\begin{equation}
    \label{eq:taylor}
    \mathcal{L}_\beta^{\rm EP}(\theta) = C(o(\theta)) + O(\beta)
    \qquad \text{and} \qquad 
    \mathcal{L}_{-\beta;+\beta}^{\rm EP}(\theta) = C(o(\theta)) + O(\beta^2).
\end{equation}
Furthermore, we have for any $\beta>0$
\begin{equation}
\mathcal{L}_\beta^{\rm EP}(\theta) \leq C(o(\theta)) \leq
\mathcal{L}_{-\beta}^{\rm EP}(\theta).
\end{equation}
\end{thm}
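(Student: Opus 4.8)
The plan is to exploit the structural fact that $F(\beta,\theta)$ is an infimum over $(h,o)$ of functions that are \emph{affine} in $\beta$ (each with slope $C(o)$). This makes $\beta\mapsto F(\beta,\theta)$ concave, and it immediately supplies both the tangent bounds and the derivative formula.

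First I would establish the one-sided bounds, which are the most elementary part and require essentially no regularity. Evaluating the infimum defining $F(\beta,\theta)$ at the free state $(h_0,o_0)$ (the minimizer at $\beta=0$, for which $o_0=o(\theta)$ and $E(\theta,h_0,o_0)=F(0,\theta)$) gives, for every $\beta\in\mathbb{R}$,
\begin{equation}
F(\beta,\theta) \le E(\theta,h_0,o_0) + \beta\, C(o_0) = F(0,\theta) + \beta\, C(o(\theta)).
\end{equation}
This is precisely the statement that the affine map $\beta\mapsto F(0,\theta)+\beta\,C(o(\theta))$ is a supporting tangent to the concave curve $F(\cdot,\theta)$ at $\beta=0$. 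Rearranging and dividing by $\beta>0$ yields $\mathcal{L}_\beta^{\rm EP}(\theta)\le C(o(\theta))$; applying the same inequality with $\beta$ replaced by $-\beta$ and dividing by $-\beta<0$ (which reverses the inequality) yields $\mathcal{L}_{-\beta}^{\rm EP}(\theta)\ge C(o(\theta))$, establishing the desired sandwich.

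Next I would prove the derivative formula by the envelope (Danskin) theorem: since the objective $E(\theta,h,o)+\beta\,C(o)$ is smooth and affine in $\beta$ with $\partial_\beta = C(o)$, and since at $\beta=0$ the minimizer is the free state with output $o(\theta)$, the value function satisfies $\partial_\beta F(0,\theta)=C(o(\theta))$. The two Taylor statements then follow from smoothness of $\beta\mapsto F(\beta,\theta)$: writing $F(\beta,\theta)=F(0,\theta)+\beta\,C(o(\theta))+\tfrac{\beta^2}{2}\,\partial_\beta^2 F(0,\theta)+\cdots$, the forward quotient $\mathcal{L}_\beta^{\rm EP}$ retains the $\tfrac{\beta}{2}\,\partial_\beta^2 F(0,\theta)$ term and is therefore only $O(\beta)$-accurate, whereas in the symmetric quotient $\mathcal{L}_{-\beta;+\beta}^{\rm EP}=\frac{F(\beta,\theta)-F(-\beta,\theta)}{2\beta}$ all even powers of $\beta$ cancel in the difference $F(\beta,\theta)-F(-\beta,\theta)$, leaving only odd powers; dividing by $2\beta$ gives $C(o(\theta))$ plus terms of order $\beta^2$ and higher.

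The main obstacle is the regularity needed to license the envelope theorem and the Taylor expansion: I must argue that the minimizer $(h_\beta,o_\beta)$ exists, is locally unique, and depends smoothly on $\beta$ — typically via the implicit function theorem applied to the stationarity condition, under a nondegeneracy assumption on the Hessian of $E$ at the minimizer — so that $\beta\mapsto F(\beta,\theta)$ inherits enough differentiability for the expansions above. I would emphasize, however, that the bounds themselves need none of this machinery: the tangent inequality uses only the definition of the infimum, so the sandwich holds unconditionally, and only the asymptotic ($O(\beta)$ and $O(\beta^2)$) claims rely on smoothness.
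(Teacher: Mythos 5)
Your proposal is correct and follows essentially the same route as the paper's proof: the sandwich bounds come from evaluating the infimum defining $F(\beta,\theta)$ at the free state, the derivative formula $\partial_\beta F(0,\theta)=C(o(\theta))$ comes from the envelope argument (the paper spells this out as the chain rule plus the first-order stationarity conditions at the perturbed state), and the $O(\beta)$ versus $O(\beta^2)$ claims come from the same Taylor expansions with cancellation of even-order terms in the symmetric quotient. If anything, you are more explicit than the paper about the regularity needed to justify the envelope theorem and the expansions (smooth dependence of the minimizer on $\beta$), which the paper assumes implicitly, and your observation that the bounds hold unconditionally from the infimum alone matches the paper's remark that $\beta\mapsto F(\beta,\theta)$ is concave as an infimum of affine functions.
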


\begin{thm}
\label{thm:eqprop-learning-rules}
The equilibrium propagation learning rules satisfy
\begin{equation}
\Delta^{\rm EP} \theta = - \eta \frac{\partial \mathcal{L}_\beta^{\rm EP}}{\partial \theta}\left( \theta \right), \qquad \Delta^{\rm C-EP} \theta = - \eta \frac{\partial \mathcal{L}_{-\beta;+\beta}^{\rm EP}}{\partial \theta}\left( \theta \right).
\end{equation}
\end{thm}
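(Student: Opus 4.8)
The plan is to reduce both identities to a single application of the envelope theorem to the value function $F(\beta,\theta) = \inf_{h,o}\{E(\theta,h,o)+\beta C(o)\}$ from \eqref{eq:augmented-energy}, after which the two learning rules drop out by differentiating the explicit definitions \eqref{eq:ep-loss} and \eqref{eq:cep-loss} termwise in $\theta$.

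The central step I would establish first is the envelope identity: for fixed $\beta$,
\[
\frac{\partial F}{\partial \theta}(\beta,\theta) = \frac{\partial E}{\partial \theta}\left(\theta,h_\beta^{\rm EP},o_\beta^{\rm EP}\right).
\]
Since $(h_\beta^{\rm EP},o_\beta^{\rm EP})$ attains the infimum, one has $F(\beta,\theta) = E(\theta,h_\beta^{\rm EP},o_\beta^{\rm EP}) + \beta C(o_\beta^{\rm EP})$; differentiating this identity in $\theta$ via the chain rule produces the explicit term $\frac{\partial E}{\partial \theta}$ plus terms of the form $\frac{\partial}{\partial h}[E+\beta C]$ and $\frac{\partial}{\partial o}[E+\beta C]$ contracted with $\frac{\partial}{\partial\theta}(h_\beta^{\rm EP},o_\beta^{\rm EP})$. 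First-order stationarity of the minimizer forces these bracketed gradients to vanish, leaving only $\frac{\partial E}{\partial\theta}$ evaluated at the minimizer (note that $C$ carries no $\theta$-dependence, so no extra $\frac{\partial C}{\partial\theta}$ term arises).

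The two claimed equalities then follow by linearity. Differentiating \eqref{eq:ep-loss} and substituting the envelope identity gives
\[
\frac{\partial \mathcal{L}_\beta^{\rm EP}}{\partial \theta}(\theta) = \frac{1}{\beta}\left(\frac{\partial E}{\partial\theta}\left(\theta,h_\beta^{\rm EP},o_\beta^{\rm EP}\right) - \frac{\partial E}{\partial\theta}\left(\theta,h_0^{\rm EP},o_0^{\rm EP}\right)\right) = -\frac{1}{\eta}\Delta^{\rm EP}\theta,
\]
and an identical computation on \eqref{eq:cep-loss}, using that the $+\beta$ and $-\beta$ states enter symmetrically, yields $\frac{\partial \mathcal{L}_{-\beta;+\beta}^{\rm EP}}{\partial\theta}(\theta) = -\frac{1}{\eta}\Delta^{\rm C-EP}\theta$.

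The hard part will be justifying the envelope identity rigorously rather than formally. Two issues require care. First, the state space $\mathcal{S} = \prod_{k=1}^4 [0,1]^{\dim(s_k)} \times \mathbb{R}$ is constrained, so the minimizer may sit on a boundary where the stationarity condition $\frac{\partial}{\partial h}[E+\beta C]=0$ fails; there I would invoke the constrained form of the envelope theorem, arguing that a coordinate pinned at an active bound has vanishing $\theta$-derivative (the bound stays active under small perturbations), so the offending product cancels regardless. Second, I would need enough regularity -- differentiability of $\theta\mapsto(h_\beta^{\rm EP},o_\beta^{\rm EP})$, or a smooth local selection of minimizers -- to make $\frac{\partial F}{\partial\theta}$ well-defined and the termwise differentiation of \eqref{eq:ep-loss}--\eqref{eq:cep-loss} legitimate; here I would appeal to the standard smoothness and uniqueness assumptions used throughout the EP literature and adapted from \citet{scellier2022agnostic}.
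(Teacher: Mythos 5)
Your proposal is correct and follows essentially the same route as the paper's proof: both compute $\frac{\partial F}{\partial \theta}(\beta,\theta) = \frac{\partial E}{\partial \theta}\left(\theta,h_\beta^{\rm EP},o_\beta^{\rm EP}\right)$ via the chain rule together with first-order stationarity of the perturbed state (the envelope argument), and then differentiate the definitions \eqref{eq:ep-loss} and \eqref{eq:cep-loss} termwise in $\theta$ to recover the learning rules. Your additional discussion of the constrained state space and of the smoothness of the minimizer selection is more careful than the paper, which implicitly assumes interior minimizers and differentiability, but it does not change the underlying argument.
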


Theorem~\ref{thm:lyapunov} is borrowed from \citet{scellier2022agnostic}, where these statements are proved in the context of `agnostic equilibrium propagation'. Theorem~\ref{thm:eqprop-learning-rules} is a new contribution. For clarity to ensure the manuscript is self-contained, we provide here proofs for both Theorem~\ref{thm:lyapunov} and Theorem~\ref{thm:eqprop-learning-rules}.

First, we introduce the function
\begin{equation}
\overline{F} \left( \beta,\theta,h,o \right) := E \left( \theta,h,o \right) + \beta C(o).
\end{equation}
Using the definitions of $F$ \eqref{eq:augmented-energy} and $(h_\beta^{\rm EP},o_\beta^{\rm EP})$, we have for every $\beta \in \mathbb{R}$,
\begin{equation}
F \left( \beta,\theta \right) = \overline{F} \left( \beta,\theta,h_\beta^{\rm EP},o_\beta^{\rm EP} \right).
\end{equation}
The first order conditions for the steady state $(h_\beta^{\rm EP}, o_\beta^{\rm EP})$ read:
\begin{equation}
\label{ep-technique}
\frac{\partial \overline{F}}{\partial h} \left( \beta,\theta,h_\beta^{\rm EP},o_\beta^{\rm EP} \right) = 0, \qquad \frac{\partial \overline{F}}{\partial o} \left( \beta,\theta,h_\beta^{\rm EP},o_\beta^{\rm EP} \right) = 0.
\end{equation}

\begin{proof}[Proof of Theorem~\ref{thm:lyapunov}]
Using the chain rule of differentiation, we have
\begin{equation}
\frac{\partial F}{\partial \beta} \left( \beta,\theta \right) = \frac{\partial \overline{F}}{\partial \beta} \left( \beta,\theta,h_\beta^{\rm EP},o_\beta^{\rm EP} \right) + \underbrace{\frac{\partial \overline{F}}{\partial h} \left( \beta,\theta,h_\beta^{\rm EP},o_\beta^{\rm EP} \right)}_{=0} \cdot \frac{\partial h_\beta^{\rm EP}}{\partial \beta} + \underbrace{\frac{\partial \overline{F}}{\partial o} \left( \beta,\theta,h_\beta^{\rm EP},o_\beta^{\rm EP} \right)}_{=0} \cdot \frac{\partial o_\beta^{\rm EP}}{\partial \beta}.
\end{equation}
Therefore
\begin{equation}
\frac{\partial F}{\partial \beta} \left( \beta,\theta \right) = \frac{\partial \overline{F}}{\partial \beta} \left( \beta,\theta,h_\beta^{\rm EP},o_\beta^{\rm EP} \right) = C \left( o_\beta^{\rm EP} \right).
\end{equation}
Evaluating this expression at $\beta=0$, and using $o_0^{\rm EP} = o(\theta)$ by definition, we get
\begin{equation}
\frac{\partial F}{\partial \beta} \left( 0,\theta \right) = C \left( o(\theta) \right),
\end{equation}
which is the first statement of Theorem~\ref{thm:lyapunov}.
Using a Taylor expansion of $F(\beta,\theta)$ around $\beta=0$, we have
\begin{equation}
F \left( \beta,\theta \right) = F(0,\theta) + \beta C \left( o(\theta) \right) + O(\beta^2).
\end{equation}
Subtracting $F(\theta,0)$ on both sides and dividing by $\beta$, we get
\begin{equation}
\mathcal{L}_\beta^{\rm EP}(\theta) = \frac{F \left( \beta,\theta \right) - F(0,\theta)}{\beta} = C \left( o(\theta) \right) + O(\beta),
\end{equation}
which is the first part of equation \eqref{eq:taylor}. Similarly, we write Taylor expansions for $F(\beta,\theta)$ and $F(-\beta,\theta)$,
\begin{align}
F \left( \beta,\theta \right) = F(0,\theta) + \beta C \left( o(\theta) \right) + \frac{1}{2} \beta^2 \frac{\partial^2 F}{\partial \beta^2} \left( 0,\theta \right) + O(\beta^3), \\
F \left( -\beta,\theta \right) = F(0,\theta) - \beta C \left( o(\theta) \right) + \frac{1}{2} \beta^2 \frac{\partial^2 F}{\partial \beta^2} \left( 0,\theta \right) + O(\beta^3).
\end{align}
Subtracting the second equation from the first one, and dividing by $2\beta$, we get
\begin{equation}
\mathcal{L}_{-\beta;+\beta}^{\rm EP}(\theta) = \frac{F \left( \beta,\theta \right) - F(-\beta,\theta)}{2\beta} = C \left( o(\theta) \right) + O(\beta^2),
\end{equation}
which is the second part of equation \eqref{eq:taylor}.

Next, we prove the upper bound and lower bound properties. To this end, we write for any $\beta \in \mathbb{R}$,
\begin{align}
F(\beta,\theta) & = \inf_{h,o} \left\{E(\theta,h,o)+ \beta C(o)\right\} \\
& \leq \inf_{h} \left\{ E \left( \theta,h,o(\theta) \right) + \beta C \left( o(\theta) \right) \right\} = F(0,\theta) + \beta C \left( o(\theta) \right),
\end{align}
where $o(\theta)$ is the free state value of output variables. Subtracting $F(\theta,0)$ on both sides we get
\begin{equation}
F(\beta,\theta) - F(\theta,0) \leq \beta C(o(\theta)).
\end{equation}
Next we divide by $\beta$. For $\beta > 0$, we get
\begin{equation}
\mathcal{L}_\beta^{\rm EP}(\theta) = \frac{F(\beta,\theta) - F(\theta,0)}{\beta} \leq C(o(\theta))
\end{equation}
and for $\beta < 0$ we get
\begin{equation}
\mathcal{L}_\beta^{\rm EP}(\theta) = \frac{F(\beta,\theta) - F(\theta,0)}{\beta} \geq C(o(\theta)).
\end{equation}
Hence the result:
\begin{equation}
\mathcal{L}_\beta^{\rm EP}(\theta) \leq C(o(\theta)) \leq  \mathcal{L}_{-\beta}^{\rm EP}(\theta), \qquad \forall \beta > 0.
\end{equation}
\end{proof}

Similarly, it can be shown more generally that the function $\beta \to \mathcal{L}_\beta^{\rm EP}(\theta)$ is non-increasing, or equivalently that the function $\beta \mapsto F(\beta,\theta)$ is concave.

\begin{proof}[Proof of Theorem~\ref{thm:eqprop-learning-rules}]
First, using \eqref{eq:ep-loss}, we have
\begin{equation}
\label{eq:proof-1}
\frac{\partial \mathcal{L}_\beta^{\rm EP}}{\partial \theta} = \frac{1}{\beta} \left( \frac{\partial F}{\partial \theta} \left( \beta,\theta \right) - \frac{\partial F}{\partial \theta} \left( 0,\theta \right) \right).
\end{equation}
Similar to the proof of Theorem~\ref{thm:lyapunov}, using the chain rule of differentiation, we have for every $\beta \in \mathbb{R}$
\begin{equation}
\frac{\partial F}{\partial \theta} \left( \beta,\theta \right) = \frac{\partial \overline{F}}{\partial \theta} \left( \beta,\theta,h_\beta^{\rm EP},o_\beta^{\rm EP} \right) + \underbrace{\frac{\partial \overline{F}}{\partial h} \left( \beta,\theta,h_\beta^{\rm EP},o_\beta^{\rm EP} \right)}_{=0} \cdot \frac{\partial h_\beta^{\rm EP}}{\partial \theta} + \underbrace{\frac{\partial \overline{F}}{\partial o} \left( \beta,\theta,h_\beta^{\rm EP},o_\beta^{\rm EP} \right)}_{=0} \cdot \frac{\partial o_\beta^{\rm EP}}{\partial \theta}.
\end{equation}
Here again, we have used the first-order steady state conditions for $h_\beta^{\rm EP}$ and $o_\beta^{\rm EP}$. Therefore
\begin{equation}
\label{eq:proof-3}
\frac{\partial F}{\partial \theta} \left( \beta,\theta \right) = \frac{\partial \overline{F}}{\partial \theta} \left( \beta,\theta,h_\beta^{\rm EP},o_\beta^{\rm EP} \right) = \frac{\partial E}{\partial \theta} \left( \theta,h_\beta^{\rm EP},o_\beta^{\rm EP} \right).
\end{equation}
Injecting \eqref{eq:proof-3} in \eqref{eq:proof-1}, we get
\begin{equation}
\frac{\partial \mathcal{L}_\beta^{\rm EP}}{\partial \theta}(\theta) = \frac{1}{\beta} \left( \frac{\partial E}{\partial \theta} \left( \theta,h_\beta^{\rm EP},o_\beta^{\rm EP} \right) - \frac{\partial E}{\partial \theta} \left( \theta,h_0^{\rm EP},o_0^{\rm EP} \right) \right)
\end{equation}
We conclude using the definitions of the equilibrium propagation learning rule \eqref{eq:eqprop-learning-rule}. We prove similarly the case of centered EP with $\frac{\partial \mathcal{L}_{-\beta;+\beta}^{\rm EP}}{\partial \theta}(\theta)$.
\end{proof}

\newpage
\section{On the coupled learning rules}
\label{sec:counter-example}

Recall that the free state is defined as
\begin{equation}
(h_\star,o_\star) := \underset{(h,o)}{\arg \min} \; E(\theta,x,h,o),
\end{equation}
that the perturbed state of coupled learning (CpL) is defined as
\begin{equation}
h_\beta^{\rm CpL} := \underset{h}{\arg \min} \; E(\theta,x,h,o_\beta^{\rm CpL}), \qquad o_\beta^{\rm CpL} := (1-\beta) o_\star +\beta y,
\end{equation}
and that the learning rule of (positively-perturbed) CpL reads
\begin{equation}
\Delta^{\rm CpL} \theta := \frac{\eta}{\beta} \left( \frac{\partial E}{\partial \theta} \left( \theta,x,h_\star,o_\star \right) - \frac{\partial E}{\partial \theta} \left( \theta,x,h_\beta^{\rm CpL},o_\beta^{\rm CpL} \right) \right).
\end{equation}
Similar to the contrastive function $\mathcal{L}_{\rm CL}$ of contrastive learning (CL) \eqref{eq:contrastive-function} and the surrogate loss $\mathcal{L}_\beta^{\rm EP}$ of equilibrium propagation (EP) \eqref{eq:ep-loss}, \citet{stern2022physical,stern2023physical} define the contrast
\begin{equation}
\mathcal{L}_{\rm CpL}^{(2)} := \frac{1}{\beta} \left( E(\theta,x,h_\beta^{\rm CpL},o_\beta^{\rm CpL}) - E(\theta,x,h_\star,o_\star) \right)
\end{equation}
and posit that the CpL rule performs gradient descent on $\mathcal{L}_{\rm CpL}^{(2)}$. Besides, \citet{stern2021supervised} argue that the CpL rule optimizes both the function
\begin{equation}
    \mathcal{L}_{\rm CpL}^{(1)}(\theta,x,y) := (y-o(\theta,x))^\top \cdot \frac{\partial^2 G}{\partial o^2}(\theta,x,o(\theta,x)) \cdot (y-o(\theta,x)), \quad G(\theta,x,o) := \min_h E(\theta,x,h,o)
\end{equation}
and the squared error loss
\begin{equation}
\mathcal{L}_{\rm MSE}(\theta,x,y) := (o(\theta,x)-y)^2,
\end{equation}
where $o(\theta,x) = o_\star$ is the free output value.

In this appendix, we show that the CpL algorithm does not perform gradient descent on $\mathcal{L}_{\rm CpL}^{(2)}$ (Appendix~\ref{sec:cpl-2}). We also show that, in general, the CpL rule optimizes neither the squared error $\mathcal{L}_{\rm MSE}$, nor the function $\mathcal{L}_{\rm CpL}^{(1)}$. To this end, we present two examples: in the first one, the function $\mathcal{L}_{\rm MSE}$ increases after a single step of the coupled learning rule (Appendix~\ref{sec:cpl-mse}) ; in the second one, the function $\mathcal{L}_{\rm CpL}^{(1)}$ increases  (Appendix~\ref{sec:cpl-1}).


\subsection{Coupled learning does not perform gradient descent on the contrast $\mathcal{L}_{\rm CpL}^{(2)}$}
\label{sec:cpl-2}

\citet{stern2022physical,stern2023physical} note that $\mathcal{L}_{\rm CpL}^{(2)}$ is non-negative and achieves its minimum possible value ($\mathcal{L}_{\rm CpL}^{(2)}=0$) when $o_\star = y$, which motivates them to minimize $\mathcal{L}_{\rm CpL}^{(2)}$ by gradient descent. However, as we show here, the coupled learning rule does not perform gradient descent on $\mathcal{L}_{\rm CpL}^{(2)}$.

Indeed, while the free state is characterized by
\begin{equation}
\frac{\partial E}{\partial h} \left( \theta,x,h_\star,o_\star \right) = 0, \qquad \frac{\partial E}{\partial o} \left( \theta,x,h_\star,o_\star \right) = 0,
\end{equation}
the perturbed state of CpL only satisfies
\begin{equation}
\frac{\partial E}{\partial h} \left( \theta,x,h_\beta^{\rm CpL},o_\beta^{\rm CpL} \right) = 0,
\end{equation}
but $\frac{\partial E}{\partial o} \left( \theta,x,h_\beta^{\rm CpL},o_\beta^{\rm CpL} \right)$ does not vanish in general -- this is in contrast with equilibrium propagation (EP), where the perturbed state satisfies \eqref{ep-technique}. Thus, we have
\begin{align}
\frac{d}{d\theta} E \left( \theta,x,h_\star,o_\star \right) & = \frac{\partial E}{\partial \theta} \left( \theta,x,h_\star,o_\star \right) \\
& + \underbrace{\frac{\partial E}{\partial h} \left( \theta,x,h_\star,o_\star \right)}_{=0} \cdot \frac{\partial h_\star}{\partial \theta} + \underbrace{\frac{\partial E}{\partial o} \left( \theta,x,h_\star,o_\star \right)}_{=0} \cdot \frac{\partial o_\star}{\partial \theta} \\
& = \frac{\partial E}{\partial \theta} \left( \theta,x,h_\star,o_\star \right),
\end{align}
and
\begin{align}
\frac{d}{d\theta} E \left( \theta,x,h_\beta^{\rm CpL},o_\beta^{\rm CpL} \right) & = \frac{\partial E}{\partial \theta} \left( \theta,x,h_\beta^{\rm CpL},o_\beta^{\rm CpL} \right) \\
& + \underbrace{\frac{\partial E}{\partial h} \left( \theta,x,h_\beta^{\rm CpL},o_\beta^{\rm CpL} \right)}_{=0} \cdot \frac{\partial h_\beta^{\rm CpL}}{\partial \theta} + \underbrace{\frac{\partial E}{\partial o} \left( \theta,x,h_\beta^{\rm CpL},o_\beta^{\rm CpL} \right)}_{\neq 0 \text{ in general} } \cdot \frac{\partial o_\beta^{\rm CpL}}{\partial \theta} \\
& = \frac{\partial E}{\partial \theta} \left( \theta,x,h_\beta^{\rm CpL},o_\beta^{\rm CpL} \right) + (1-\beta) \underbrace{\frac{\partial E}{\partial o} \left( \theta,x,h_\beta^{\rm CpL},o_\beta^{\rm CpL} \right)}_{\neq 0 \text{ in general} } \cdot \frac{\partial o_\star}{\partial \theta}.
\end{align}
As a result, the gradient of $\mathcal{L}_{\rm CpL}^{(2)}$ is
\begin{align}
\frac{\partial \mathcal{L}_{\rm CpL}^{(2)}}{\partial \theta} & = \frac{1}{\beta} \left( \frac{\partial E}{\partial \theta}(\theta,x,h_\beta^{\rm CpL},o_\beta^{\rm CpL}) - \frac{\partial E}{\partial \theta}(\theta,x,h_\star,o_\star) \right) \\
& + \frac{1-\beta}{\beta} \underbrace{\frac{\partial E}{\partial o} \left( \theta,x,h_\beta^{\rm CpL},o_\beta^{\rm CpL} \right) \cdot \frac{\partial o_\star}{\partial \theta}}_{\neq 0 \text{ in general} }.
\end{align}
The last term does not vanish in general, unless $\beta=1$. We note that the case $\beta=1$ corresponds to contrastive learning (CL, section~\ref{sec:cl}) ; hence we have recovered Theorem~\ref{thm:contrastive-learning}, known since \citet{movellan1991contrastive}. Thus, whereas CL performs gradient descent on $\mathcal{L}_{\rm CL}$ and EP performs gradient descent on $\mathcal{L}_\beta^{\rm EP}$ for any value of $\beta$, the CpL rule does not perform gradient descent on $\mathcal{L}_{\rm CpL}^{(2)}$.

\subsection{One step of coupled learning may increase the squared error $\mathcal{L}_{\rm MSE}$}
\label{sec:cpl-mse}

Next, we show that CpL does not optimize $\mathcal{L}_{\rm MSE}$. Let $A$ be the $2\times2$ square matrix
\begin{equation}
A :=
\left( \begin{array}{cc}
1 & -1 \\
-1 & 2 \\
\end{array} \right)
\end{equation}
and $b: \mathbb{R}^2 \to \mathbb{R}^2$ the function defined by
\begin{equation}
b(\theta_1,\theta_2) := 
\left( \begin{array}{c}
1 + \theta_2 \\
\theta_1 + 2 \theta_2 \\
\end{array} \right), \qquad \theta_1, \theta_2 \in \mathbb{R}.
\end{equation}
We define the energy function $E: \mathbb{R}^2 \times \mathbb{R}^2 \to \mathbb{R}$ by
\begin{equation}
\label{eq:example1-energy}
\forall \theta \in \mathbb{R}^2, \forall o \in \mathbb{R}^2, \quad E(\theta,o) := \frac{1}{2} (o-b(\theta))^\top A (o-b(\theta)).
\end{equation}
For simplicity, there is no input ($x$) and no hidden variable ($h$) in our example. Since the matrix $A$ is symmetric positive definite, the minimum of $o \mapsto E(\theta,o)$ is achieved at the point $o_\star = b(\theta)$, where $E(\theta,o_\star) = 0$. Next, let's assume that the `desired output' is $y:=(0,0) \in \mathbb{R}^2$. The perturbed state of the CpL algorithm is
\begin{equation}
\label{eq:cpl-perturbed-state}
o_\beta^{\rm CpL} := (1-\beta) o_\star + \beta y = (1-\beta) b(\theta),
\end{equation}
where $\beta \in \mathbb{R}$ is the nudging parameter, and the P-CpL rule is
\begin{equation}
\label{eq:example1-pcpl}
\Delta^{\rm CpL} \theta := \frac{\eta}{\beta} \left( \frac{\partial E}{\partial \theta}(\theta,o_\star) - \frac{\partial E}{\partial \theta}(\theta,o_\beta^{\rm CpL}) \right),
\end{equation}
where $\eta$ is the learning rate. Using \eqref{eq:example1-energy} we calculate
\begin{equation}
\frac{\partial E}{\partial \theta}(\theta,o) = b'(\theta)^\top A \left( o - b(\theta) \right).
\end{equation}
In particular,
\begin{equation}
\frac{\partial E}{\partial \theta}(\theta,o_\star) = 0
\end{equation}
and
\begin{equation}
\frac{\partial E}{\partial \theta}(\theta,o_\beta^{\rm CpL}) = b'(\theta)^\top A \left( o_\beta^{\rm CpL} - b(\theta) \right) = -\beta b'(\theta)^\top A b(\theta),
\end{equation}
where we have used \eqref{eq:cpl-perturbed-state}. So the P-CpL rule \eqref{eq:example1-pcpl} rewrites\footnote{We note that the learning rules of N-CpL and C-CpL are identical in this example.}
\begin{equation}
\Delta^{\rm CpL} \theta = - \eta b'(\theta)^\top A b(\theta).
\end{equation}
Next, we define the squared error between the prediction $o_\star$ and the `desired output' $y=(0,0)$,
\begin{equation}
    \mathcal{L}_{\rm MSE}(\theta) := \|o_\star - y\|^2 = \|b(\theta)\|^2.
\end{equation}
To conclude, we show that for $\eta$ small enough, $\mathcal{L}_{\rm MSE}(\theta_0 + \Delta^{\rm CpL} \theta) > \mathcal{L}_{\rm MSE}(\theta_0)$ at the point $\theta_0=(0,0)$ independently of the value of the nudging parameter $\beta \in \mathbb{R}$. To show this, first we calculate $\frac{\partial \mathcal{L}_{\rm MSE}}{\partial \theta} \cdot \Delta^{\rm CpL} \theta$. We have
\begin{equation}
\frac{\partial \mathcal{L}_{\rm MSE}}{\partial \theta} = b(\theta)^\top b'(\theta),
\end{equation}
so that
\begin{equation}
\frac{\partial \mathcal{L}_{\rm MSE}}{\partial \theta} \cdot \Delta^{\rm CpL} \theta = - \eta b(\theta)^\top b'(\theta) b'(\theta)^\top A b(\theta).
\end{equation}
Let us calculate:
\begin{equation}
b'(\theta) = 
\left( \begin{array}{cc}
0 & 1 \\
1 & 2 \\
\end{array} \right),
\end{equation}
and
\begin{align}
b'(\theta) b'(\theta)^\top A & = \left( \begin{array}{cc}
0 & 1 \\
1 & 2 \\
\end{array} \right) \left( \begin{array}{cc}
0 & 1 \\
1 & 2 \\
\end{array} \right) \left( \begin{array}{cc}
1 & -1 \\
-1 & 2 \\
\end{array} \right) \\
& =
\left( \begin{array}{cc}
1 & 2 \\
2 & 5 \\
\end{array} \right) \cdot
\left( \begin{array}{cc}
1 & -1 \\
-1 & 2 \\
\end{array} \right) \\
& =
\left( \begin{array}{cc}
-1 & 3 \\
-3 & 8 \\
\end{array} \right).
\end{align}
Since at the point $\theta_0=(0,0)$ we have
\begin{equation}
b(\theta_0) = 
\left( \begin{array}{c}
1 \\
0 \\
\end{array} \right),
\end{equation}
it follows that
\begin{equation}
b(\theta_0)^\top b'(\theta_0) b'(\theta_0)^\top A b(\theta_0) = -1.
\end{equation}
Therefore, at the point $\theta_0=(0,0)$,
\begin{equation}
    \frac{\partial \mathcal{L}_{\rm MSE}}{\partial \theta} \cdot \Delta^{\rm CpL} \theta = \eta.
\end{equation}
Finally,
\begin{equation}
    \mathcal{L}_{\rm MSE}(\theta_0 + \Delta^{\rm CpL} \theta) - \mathcal{L}_{\rm MSE}(\theta_0) = \frac{\partial \mathcal{L}_{\rm MSE}}{\partial \theta} \cdot \Delta^{\rm CpL} \theta + O(\| \Delta^{\rm CpL} \theta \|^2) = \eta + O(\eta^2).
\end{equation}
Thus, provided that the learning rate $\eta > 0$ is small enough, the loss value $\mathcal{L}_{\rm MSE}(\theta_0 + \Delta \theta)$ after one step of coupled learning is larger than $\mathcal{L}_{\rm MSE}(\theta_0)$. This holds for any nudging value $\beta \in \mathbb{R}$.

\subsection{One step of coupled learning may increase $\mathcal{L}_{\rm CpL}^{(1)}$}
\label{sec:cpl-1}

We consider a second example, where we define the energy function $E: (0,5/4) \times \mathbb{R} \to \mathbb{R}$ by
\begin{equation}
    \forall \theta \in (0,5/4), \quad \forall o \in \mathbb{R}, \qquad E(\theta,o) := \frac{1}{2} (5-4\theta) (o-\theta)^2.
\end{equation}
The minimum of $o \mapsto E(\theta,o)$ is obtained at the point $o_\star = \theta$, where $E(\theta,o_\star) = 0$. As in the previous example, we assume that the `desired output' is $y:=0$, so the perturbed state of CpL is
\begin{equation}
o_\beta^{\rm CpL} = (1-\beta) o_\star + \beta y = (1-\beta) \theta
\end{equation}
and the CpL rule is
\begin{equation}
    \Delta^{\rm CpL} \theta = \frac{\eta}{\beta} \left( \frac{\partial E}{\partial \theta}(\theta,o_\star) - \frac{\partial E}{\partial \theta}(\theta,o_{\rm CL}^\beta) \right).
\end{equation}
We calculate
\begin{equation}
\frac{\partial E}{\partial \theta}(\theta,o) = (5-2\theta-2o) (o-\theta).
\end{equation}
In particular,
\begin{equation}
\frac{\partial E}{\partial \theta}(\theta,o_\star) = 0
\end{equation}
and
\begin{equation}
\frac{\partial E}{\partial \theta}(\theta,o_\beta^{\rm CpL}) = (5-2\theta-2o_\beta^{\rm CpL}) (o_\beta^{\rm CpL}-\theta) = (5-4\theta+2\beta\theta) \beta \theta,
\end{equation}
so the CpL rule reads
\begin{equation}
    \Delta^{\rm CpL} \theta = -\eta (5-4\theta+2\beta\theta) \theta.
\end{equation}
Now, recall the definition of the function $\mathcal{L}_{\rm CpL}^{(1)}$ of \eqref{eq:effective-loss}. In this example, it is equal to
\begin{equation}
    \mathcal{L}_{\rm CpL}^{(1)}(\theta) := o_\star \cdot \frac{\partial^2 E}{\partial o^2}(\theta,o_\star) \cdot o_\star = (5-4\theta) \theta^2,
\end{equation}
and its derivative (or `gradient') is
\begin{equation}
    (\mathcal{L}_{\rm CpL}^{(1)})^\prime(\theta) = 10 \theta-12 \theta^2.
\end{equation}
At the point $\theta_0=1$ we have $(\mathcal{L}_{\rm CpL}^{(1)})^\prime(\theta_0) = -2$, and $\Delta^{\rm CpL} \theta = -\eta (1+2\beta)$. This means that $(\mathcal{L}_{\rm CpL}^{(1)})^\prime(\theta_0) < 0$, and $\Delta^{\rm CpL} \theta < 0$ for any nudging value $\beta > -1/2$. Thus, for a small $\eta$ and any $\beta > -1/2$, the function $\mathcal{L}_{\rm CpL}^{(1)}$ increases after a single step of the CpL rule.

\newpage
\section{Energy minimization procedure for deep convolutional Hopfield networks via asynchronous updates}
\label{sec:asynchronous-update-scheme}

Prior works on deep convolutional Hopfield networks \citep{ernoult2019updates,laborieux2021scaling,laydevant2021training,laborieux2022holomorphic} minimized the energy function using a `synchronous update' scheme, where at every iteration, all the layers are updated synchronously. In this work, we use an `asynchronous update' scheme.

Recall the form of the energy function of a deep convolutional Hopfield network (DCHN), given by \eqref{eq:dchn-energy}. \citet{ernoult2019updates} rewrite this energy function in  the form
\begin{equation}
    E(\theta,s) = \frac{1}{2} \| s \|^2 - \Phi(\theta,s),
\end{equation}
where $\| s \|^2 := \sum_{k=1}^5 \| s_k \|^2$ and $\Phi$ is the so-called `primitive function', defined as
\begin{equation}
    \Phi(\theta,s) := \sum_{k=1}^4 s_k\bullet\mathcal{P}\left(w_k\star s_{k-1}\right) + s_5^\top w_5 s_4 + \sum_{k=1}^5 b_k^\top s_k.
\end{equation}
In this expression, $s=(s_0,s_1,s_2,s_3,s_4,s_5)$ if the state of the network, $x=s_0$ is the input layer, $h=(s_1,s_2,s_3,s_4)$ is the hidden variable, $o=s_5$ is the output variable, $w_k$ and $b_k$ are the kernel (the weights) and bias of layer $k$, $\star$ is the convolution operation, $\mathcal{P}$ is the max pooling operation, $\bullet$ is the scalar product for pairs of tensors, and $\theta = \{ w_k, b_k \mid 1 \leq k \leq 5 \}$ is the set of model parameters.
With these notations, a critical point $s_\star$ of $s \mapsto E(\theta,s)$ satisfies $\frac{\partial E}{\partial s}(\theta,s_\star) = s_\star - \frac{\partial \Phi}{\partial s}(\theta,s_\star) = 0$. \citet{ernoult2019updates} thus define the fixed point dynamics
\begin{equation}
s^{(t+1)} = \sigma \left( \frac{\partial \Phi}{\partial s}(\theta,s^{(t)}) \right),
\end{equation}
where $\sigma(\cdot) = \max(0,\min(\cdot,1))$ is the `hard sigmoid function', which they \textit{assume} to converge to a fixed point $s_\star$. From this expression, the fixed point dynamics rewrites
\begin{equation}
\forall k, \qquad s_k^{(t+1)} = \sigma \left( \mathcal{P}\left(w_k\star s_{k-1}^{(t)}\right) + \frac{\partial \left[ s_{k+1}^{(t)} \bullet \mathcal{P}\left(w_{k+1} \star s_k^{(t)} \right) \right]}{\partial s_k} + b_k \right).
\end{equation}
This dynamics is what we call here `synchronous update' scheme, because at every iteration (or `time step') $t$, all the layers $s_k$ ($1 \leq k \leq 5$) are updated synchronously.

In this work, we use the asynchronous update scheme, where at each iteration, we first update the layers of even indices:
\begin{equation}
\forall k \; { \rm even}, \qquad s_k^{(t+1)} = \sigma_k \left( \mathcal{P}\left(w_k\star s_{k-1}^{(t)}\right) + \frac{\partial \left[ s_{k+1}^{(t)} \bullet \mathcal{P}\left(w_{k+1} \star s_k^{(t)} \right) \right]}{\partial s_k} + b_k \right),
\end{equation}
where $\sigma_k$ is the activation  function of layer $k$, and then we update the layers of odd indices:
\begin{equation}
\forall k \; { \rm odd}, \qquad s_k^{(t+1)} = \sigma_k \left( \mathcal{P}\left(w_k\star s_{k-1}^{(t+1)}\right) + \frac{\partial \left[ s_{k+1}^{(t+1)} \bullet \mathcal{P}\left(w_{k+1} \star s_k^{(t)} \right) \right]}{\partial s_k} + b_k \right).
\end{equation}

Although we do not have a proof of convergence of either of these schemes (`synchronous' and `asynchronous'), we find experimentally that the asynchronous scheme converges faster. This allows us to reduce the number of iterations in the free and perturbed phases and thus get a significant speedup, as explained in the next subsection.

\subsection{13.5x simulation speedup compared to \citet{laborieux2021scaling}}

We compare in simulations two settings:
\begin{enumerate}
\item In the first setting, we use the asynchronous update scheme with 60 iterations at inference, 20 iterations in the perturbed phases, and 16 bit precision. This is the setting of the simulations performed in the present work.
\item In the second setting, we use the synchronous update scheme with 250 iterations at inference, 30 iterations in the perturbed phase, and 32 bits precision. This is (up to minor differences) the setting of the simulations of \citet{laborieux2021scaling}.
\end{enumerate}
We perform the simulations using the C-EP algorithm on CIFAR-10, and we use the hyperparameters reported as `SOTA results on CIFAR10' in Table~\ref{table:hyperparams}. We perform three runs in the first setting, and one run in the second setting, and we report the results in Table~\ref{table:asynchronous}. We see that the first setting is 13.5 times faster than the second setting, requiring only 3 hours and 18 minutes to complete on a single A-100 GPU (compared to the 44 hours 29 minutes required in the second setting). Furthermore, the first setting also performs marginally better in terms of training and test error rates than the second one.

\begin{table}[ht!]
  \caption{Comparison of the performance and the wall-clock-time between the simulation setting of \citet{laborieux2021scaling} and ours. The simulations of \citet{laborieux2021scaling} use the synchronous update scheme, in conjunction with 32 bits precision, 250 iterations at inference (free phase), and 30 iterations in the perturbed phase. Our asynchronous update scheme is used in conjunction with 16 bits precision, 60 iterations at inference and 15 iterations in the perturbed phase. Simulations are performed on CIFAR-10 using the C-EP algorithm for training. We perform 100 epochs and use the hyperparameters of the `SOTA CIFAR-10 simulations' provided in Table~\ref{table:hyperparams}. We report the test error rate (Test Error), the training error rate (Train Error) and the wall-clock time (WCT). WCT is reported as HH:MM.
  }
  \label{table:asynchronous}
  \centering
\begin{tabular}{cccc}
\toprule
 & Test Error & Train Error & WCT \\
\cmidrule(r){1-4}
Synchronous - 32 bits - 250 iterations (1 run) & $10.85 \%$ & $3.58 \%$ & 44:29 \\
Asynchronous - 16 bits - 60 iterations (3 runs) & $10.40\pm0.10 \%$ & $3.48\pm0.14 \%$ & 03:18 \\
\bottomrule
\end{tabular}
\end{table}

\newpage
\section{Simulation details}
\label{sec:simulation-details}

\paragraph{Datasets.}
We perform simulations on the MNIST, Fashion-MNIST, SVHN, CIFAR-10 and CIFAR-100 datasets.

The MNIST dataset is composed of images of handwritten digits \citep{lecun1998gradient}. Each image $x$ in the dataset is a $28 \times 28$ gray-scaled image and comes with a label $y \in \left\{ 0, 1, \ldots, 9 \right\}$ indicating the digit that the image represents. The dataset contains 60,000 training images and 10,000 test images. 

The Fashion-MNIST dataset \citep{xiao2017fashion} shares the same image size, data format and the same structure of training and testing splits as MNIST. It comprises a training set of 60,000 images and a test set of
10,000 images. Each example is a $28 \times 28$ grayscale
image from ten categories of fashion products.

The SVHN dataset \citep{netzer2011reading} contains color images of $32 \times 32$ pixels, derived from house numbers in Google Street View images. Like in MNIST, each image comes with a label corresponding to a digit from $0$ to $9$. The number of images per class varies due to the natural distribution of digits in the real world. The dataset contains a training set with $73,257$ images, and a test set containing $26,032$ images. The dataset also contains an additional set of $531,131$ other (somewhat less difficult) images, which we do not use in our simulations.

The CIFAR-10 dataset \citep{krizhevsky2009learning} consists of $60,000$ colour images of $32 \times 32$ pixels. These images are split in $10$ classes (each corresponding to an object or animal), with $6,000$ images per class. The training set consists of $50,000$ images and the test set of $10,000$ images.

Similarly, the CIFAR-100 dataset \citep{krizhevsky2009learning} also comprises $60,000$ color images with a resolution of $32 \times 32$ pixels, featuring a diverse set of objects and animals. These images are categorized into $100$ distinct classes, each containing $600$ images. Like CIFAR-10, the dataset is divided into a training set with $50,000$ images and a test set containing the remaining $10,000$ images.

\paragraph{Data pre-processing and data augmentation.}
The data is normalized as in Table~\ref{table:data-normalization}.

Since the deep convolutional Hopfield network used in our simulations takes as input a 32x32-pixel image, we augment each image of MNIST and Fashion-MNIST to a 32x32-pixel image by adding two pixels at the top, two pixels at the bottom, two pixels to the left and two pixels to the right.

On the training sets of Fashion-MNIST, CIFAR-10 and CIFAR-100, we also use random horizontal flipping. (Random flipping is not used on the training sets of MNIST and SVHN, and it is never used at test time either.)

\begin{table}[h!]
\caption{Data normalization. We normalize the input images using the recommended mean ($\mu$) and std ($\sigma$) values for each dataset. The MNIST and Fashion-MNIST images are gray-scale, i.e. they have a unique channel. The SVHN, CIFAR-10 and CIFAR-100 images are color images, i.e. they have three channels.}
\label{table:data-normalization}
\vspace{0.5cm}
\centering
\begin{tabular}{ccc}
\toprule
 & mean ($\mu$) & std ($\sigma$) \\
\cmidrule(r){1-3}
MNIST & 0.1307 & 0.3081 \\
Fashion-MNIST & 0.2860 & 0.3530 \\
SVHN & (0.4377, 0.4438, 0.4728) & (0.1980, 0.2010, 0.1970) \\
CIFAR-10 & (0.4914, 0.4822, 0.4465) & (3*0.2023, 3*0.1994, 3*0.2010) \\
CIFAR-100 & (0.5071, 0.4867, 0.4408) & (0.2675, 0.2565, 0.2761) \\
\bottomrule
\end{tabular}
\end{table}

\paragraph{Network architecture and hyperparameters.}
Table~\ref{table:hyperparams} contains the architectural details of the network, as well as the hyperparameters used to obtain the results presented in Table~\ref{table:comparison}, Table~\ref{table:best} and Table~\ref{table:comparison-2}. The hard-sigmoid activation function is defined as $\sigma(h) := \max(0,\min(h,1))$.

\paragraph{Weight initialization.} We initialize the weights of dense interactions and convolutional interactions according to
\begin{equation}
    w_{ij} \sim \mathcal{U}(-c,+c), \qquad c = \alpha \sqrt{\frac{1}{{\rm fan}_{\rm in}}},
\end{equation}
which is the `Kaiming uniform' scheme rescaled by a factor $\alpha$, that we call the `gain' here (i.e. a scaling number). For dense weights of shape (${\rm size}_{\rm pre}$, ${\rm size}_{\rm post}$), we have ${\rm fan}_{\rm in} = {\rm size}_{\rm pre}$ ; for convolutional weights of shape (${\rm channel}_{\rm in}$, ${\rm channel}_{\rm out}$, ${\rm height}$, ${\rm width}$), we have ${\rm fan}_{\rm in} = {\rm channel}_{\rm in} \times {\rm height} \times {\rm width}$. See Table \ref{table:hyperparams} for the choice of the gains.

\begin{table}
\caption{The \textbf{top table} indicates the architecture of the deep convolutional Hopfield network (DCHN). We denote $n_{\rm in}$ the number of input filters (either 1 or 3 depending on the dataset) and $n_{\rm out}$ the number of output units (either 10 or 100 depending on the dataset). The \textbf{bottom table} indicates the hyper-parameters used for initializing and training the network. lr means learning rate. The first two columns (Comparative study) provide the values used for the extensive comparison of learning algorithms (Table~\ref{table:comparison} and Table~\ref{table:comparison-2}). The other columns (SOTA results) provide the values used to obtain the best performances on MNIST, CIFAR-10 and CIFAR-100, after 100 epochs and 300 epochs (Table~\ref{table:best})}
\label{table:hyperparams}
\vspace{0.5cm}
\centering
\begin{tabular}{ccccc}
\toprule
 & Layer shapes & Weight shapes & Bias shapes & Activation functions \\
\cmidrule(r){1-5}
Layer 0 (inputs) & $n_{\rm in}\plh32\plh32$ &  &  &  \\
Layer 1 & $128\plh16\plh16$ & $128\plh n_{\rm in}\plh3\plh3$ & 128 & Hard-sigmoid \\
Layer 2 & $256\plh8\plh8$ & $256\plh128\plh3\plh3$ & 256 & Hard-sigmoid \\
Layer 3 & $512\plh4\plh4$ & $512\plh256\plh3\plh3$ & 512 & Hard-sigmoid \\
Layer 4 & $512\plh2\plh2$ & $512\plh512\plh3\plh3$ & 512 & Hard-sigmoid \\
Layer 5 (outputs) & $n_{\rm out}$ & $512\plh2\plh2\plh n_{\rm out}$ & 10 & Identity \\
\bottomrule
\end{tabular}
\begin{tabular}{cccccccc}
\hline
 & \multicolumn{2}{c}{Comparative study} & \multicolumn{5}{c}{SOTA results (Table~\ref{table:best})} \\
\cmidrule(r){2-3}
\cmidrule(r){4-8}
 & Table~\ref{table:comparison} & Table~\ref{table:comparison-2} & MNIST & \multicolumn{2}{c}{CIFAR-10} & \multicolumn{2}{c}{CIFAR-100} \\
\hline
nudging ($\beta$) & \multicolumn{2}{c}{0.25} & 0.25 & \multicolumn{2}{c}{0.1} & \multicolumn{2}{c}{0.25} \\
num. iterations at inference ($T$) & \multicolumn{2}{c}{60} & 60 & \multicolumn{2}{c}{60} & \multicolumn{2}{c}{60} \\
num. iterations at training ($K$) & \multicolumn{2}{c}{15} & 15 & \multicolumn{2}{c}{20} & \multicolumn{2}{c}{15} \\
gain conv-weight 1 ($\alpha_1$) & 0.5 & 0.7 & 0.5 & \multicolumn{2}{c}{0.4} & \multicolumn{2}{c}{0.5} \\
gain conv-weight 2 ($\alpha_2$) & 0.5 & 0.7 & 0.5 & \multicolumn{2}{c}{0.7} & \multicolumn{2}{c}{0.4} \\
gain conv-weight 3 ($\alpha_3$) & 0.5 & 0.7 & 0.5 & \multicolumn{2}{c}{0.6} & \multicolumn{2}{c}{0.5} \\
gain conv-weight 4 ($\alpha_4$) & 0.5 & 0.7 & 0.5 & \multicolumn{2}{c}{0.3} & \multicolumn{2}{c}{0.8} \\
gain dense-weight 5 ($\alpha_5$) & 0.5 & 0.7 & 0.5 & \multicolumn{2}{c}{0.4} & \multicolumn{2}{c}{0.5} \\
lr conv-weight 1 \& bias 1 ($\eta_1$) & \multicolumn{2}{c}{0.0625} & 0.0625 & \multicolumn{2}{c}{0.03} & \multicolumn{2}{c}{0.03} \\
lr conv-weight 2 \& bias 2 ($\eta_2$) & \multicolumn{2}{c}{0.0375} & 0.0375 & \multicolumn{2}{c}{0.03} & \multicolumn{2}{c}{0.04} \\
lr conv-weight 3 \& bias 3 ($\eta_3$) & \multicolumn{2}{c}{0.025} & 0.025 & \multicolumn{2}{c}{0.03} & \multicolumn{2}{c}{0.04} \\
lr conv-weight 4 \& bias 4 ($\eta_4$) & \multicolumn{2}{c}{0.02} & 0.02 & \multicolumn{2}{c}{0.03} & \multicolumn{2}{c}{0.04} \\
lr dense-weight 5 \& bias 5 ($\eta_5$) & \multicolumn{2}{c}{0.0125} & 0.0125 & \multicolumn{2}{c}{0.03} & \multicolumn{2}{c}{0.025} \\
momentum & \multicolumn{2}{c}{0.9} & 0.9 & \multicolumn{2}{c}{0.9} & \multicolumn{2}{c}{0.9} \\
weight decay (1e-4) & \multicolumn{2}{c}{3.0} & 3.0 & \multicolumn{2}{c}{2.5} & \multicolumn{2}{c}{3.5} \\
mini-batch size & \multicolumn{2}{c}{128} & 128 & \multicolumn{2}{c}{128} & \multicolumn{2}{c}{128} \\
number of epochs & \multicolumn{2}{c}{100} & 100 & 300 & 100 & 300 & 100 \\
$T_{\rm max}$ & \multicolumn{2}{c}{100} & 100 & 300 & 100 & 300 & 100 \\
$\eta_{\rm min}$ (1e-6) & \multicolumn{2}{c}{2.0} & 2.0 & \multicolumn{2}{c}{2.0} & \multicolumn{2}{c}{2.0} \\
\hline
\end{tabular}
\end{table}

\paragraph{Energy minimization via asynchronous updates.}
To compute the steady state of the network, we use the `asynchronous update' scheme described in appendix~\ref{sec:asynchronous-update-scheme}: at every iteration, we first update the layers of even indices (the first half of the layers) and then we update the layers of odd indices (the other half of the layers). Relaxing all the layers once (first the even layers, then the odd layers) constitutes one `iteration'. We repeat as many iterations as is necessary until convergence to the steady state. In Table~\ref{table:hyperparams}, we denote $T$ the number of iterations performed at inference (free phase), and we denote $K$ the number of iterations performed in the perturbed phase.

\paragraph{Training procedure.} We train our networks with seven energy-based learning (EBL) algorithms, namely: contrastive learning (CL), positively-perturbed equilibrium propagation (P-EP), negatively-perturbed EP (N-EP), centered EP (C-EP), positively-perturbed coupled learning (P-CpL), negatively-perturbed CpL (N-CpL) and centered CpL (C-CpL).
Given an EBL algorithm, at each training step of SGD, we proceed as follows. First we pick a mini-batch of samples in the training set, $x$, and their corresponding labels, $y$. Then we set the nudging parameter to $0$ and we perform a free phase of $T$ asynchronous iterations. This phase allows us in particular to measure the training loss and training error rate for the current mini-batch, to monitor training. We also store the free state $(h_\star,o_\star)$. Next, let's call $\beta_1$ and $\beta_2$ the two nudging values used for training (which depend on the EBL algorithm). First, we set the nudging parameter to $\beta_1$ and we perform a new relaxation phase of $K$ asynchronous iterations. Next, we reset the state of the network to the free state $(h_\star,o_\star)$, then we set the nudging parameter to the second nudging value $\beta_2$, and we perform the last relaxation phase for $K$ asynchronous iterations. Finally, we update all the parameters simultaneously in a single `parameter update' phase.

\paragraph{Optimizer and scheduler.}
We use mini-batch gradient descent (SGD) with momentum and weight decay. We also use a cosine-annealing scheduler for the learning rates with hyperparameters $T_{\rm max}$ and $\eta_{\rm min}$.

\paragraph{Computational resources.}
The code for the simulations uses PyTorch 1.13.1 and TorchVision 0.14.1. \cite{paszke2017automatic}.
The simulations were carried on Nvidia A100 GPUs. For the comparative study, we used five GPUs (one GPU per dataset) and let them run for six days to complete all the simulations.

\newpage
\section{Additional simulations on SVHN for the comparative study}
\label{sec:svhn}

The results of Table~\ref{table:comparison} show that most algorithms (P-EP, N-EP, P-CpL and N-CpL) perform very poorly on SVHN: the training process often gets stuck at 81.08\% training error and 80.41\% test error rate. These results seem to go against our conclusion that N-EP and N-CpL generally perform (much) better than C-EP and C-CpL and CL. We hypothesize that this poor performance on SVHN is due to the network being poorly initialized for this classification task, and we perform additional simulations on SVHN where we change the initialisation scheme of the weights of the network -- see Table~\ref{table:hyperparams} for the new hyperparameters used for these additional experiments. As for the comparative study of Section~\ref{sec:comparative-study}, with this new set of hyperparameters, we compare the seven EBL algorithms of section~\ref{sec:learning-algorithms} by training the DCHN of section~\ref{sec:dchn} on SVHN. We also compare the performance of these algorithms to recurrent backpropagation (RBP) and truncated backpropagation (TBP). The results of these additional simulations are reported in Table~\ref{table:comparison-2}.

Table~\ref{table:comparison-2} shows that the N-EP and P-CpL simulations have large variance, so we also report the results for each run. The three runs of N-EP gave the following results: 1) 3.73\% test error,  2) 3.87\% test error, and 3) 80.41\% test error. The third N-EP run was indeed also blocked at 81.08\% training error and 80.41\% test error (like many other runs). The three runs of P-CpL gave the following results: 1) 71.05\% test error, 2) 15.35\% test error, and 3) 15.57\% test error.

The results of Table~\ref{table:comparison-2} are consistent with our conclusions of Section~\ref{sec:comparative-study}: 1) strong positive perturbations (CL) yield better results than weak positive perturbations (P-CpL and P-EP), 2) negative perturbations (N-EP, N-CpL) yield better results than positive perturbations (CL, C-CpL, C-EP), 3) two-sided perturbations (C-EP, C-CpL) perform better than one-sided perturbations (N-EP, N-CpL, CL, P-CpL, P-EP), and 4) the EP perturbation technique yields better results than the CpL perturbation technique.

\begin{table}[ht!]
  \caption{Additional experiments on SVHN using a different set of hyperparameters. We compare the seven EBL algorithms and benchmark them against truncated backpropagation (TBP) and recurrent backpropagation (RBP). We report the training and test error rates, averaged over three runs. The hyperparameters used for this study are detailed in Table~\ref{table:hyperparams}.}
    \label{table:comparison-2}
  \centering
\begin{tabular}{cccccc}
    \toprule
    & \multicolumn{2}{c}{SVHN} \\
    \cmidrule(r){2-3}
     & Test Error (\%) & Train Error (\%) \\
    \cmidrule(r){1-3}
    TBP & $3.66 \pm 0.05$ & $2.36 \pm 0.04$ \\
    RBP & $3.91 \pm 0.05$ & $3.29 \pm 0.04$ \\
    \cmidrule(r){1-3}
    CL & $9.64 \pm 1.59$ & $27.08 \pm 5.68$ \\
    P-EP & $74.88 \pm 10.60$ & $77.32 \pm 5.84$ \\
    N-EP & $29.34 \pm 36.12$ & $29.01 \pm 36.82$ \\
    C-EP & $\bf{3.58 \pm 0.07}$ & $3.00 \pm 0.05$ \\
    P-CpL & $33.99 \pm 26.21$ & $47.28 \pm 26.52$ \\
    N-CpL & $4.53 \pm 0.05$ & $5.27 \pm 0.05$ \\
    C-CpL & $4.10 \pm 0.04$ & $4.68 \pm 0.10$ \\
    \bottomrule
\end{tabular}
\end{table}

In Table~\ref{table:comparison-3}, we report the results of Table~\ref{table:comparison} together with the test std values.

\begin{table}[ht!]
  \caption{Results of Table~\ref{table:comparison} reported with the test std values.
  }
  \label{table:comparison-3}
  \centering
\begin{tabular}{ccccccccccccccc}
    \toprule
    & \multicolumn{2}{c}{MNIST} & \multicolumn{2}{c}{FashionMNIST} & \multicolumn{2}{c}{SVHN} & \multicolumn{2}{c}{CIFAR-10} & \multicolumn{2}{c}{CIFAR-100} \\
    \cmidrule(r){2-3} \cmidrule(r){4-5} \cmidrule(r){6-7} \cmidrule(r){8-9} \cmidrule(r){10-11}
     & Test & Train & Test & Train & Test & Train & Test & Train & Test & Train \\
    \cmidrule(r){1-11}
    TBP & $0.42^{\pm 0.02}$ & 0.23 & $6.12^{\pm 0.10}$ & 3.09 & $3.76^{\pm 0.06}$ & 2.37 & $10.1^{\pm 0.2}$ & 3.1 & $33.4^{\pm 0.5}$ & 17.2 \\
    RBP & $0.44^{\pm 0.04}$ & 0.33 & $6.28^{\pm 0.12}$ & 3.70 & $3.87^{\pm 0.09}$ & 3.43 & $10.7^{\pm 0.1}$ & 5.2 & $34.4^{\pm 0.2}$ & 18.2 \\
    \cmidrule(r){1-11}
    CL & $0.61^{\pm 0.02}$ & 2.39 & $10.10^{\pm 0.17}$ & 15.49 & $6.10^{\pm 0.13}$ & 15.8 & $31.4^{\pm 2.2}$ & 45.2 & $71.4^{\pm 3.6}$ & 88.6 \\
    P-EP & $1.66^{\pm 0.80}$ & 2.29 & $90.00^{\pm 0.00}$ & 89.98 & $83.9^{\pm 4.9}$ & 81.9 & $72.6^{\pm 0.7}$ & 79.5 & $89.4^{\pm 9.5}$ & 95.5 \\
    N-EP & $\bf{0.42}^{\pm 0.01}$ & 0.19 & $\bf{6.22}^{\pm 0.10}$ & 3.87 & $80.4^{\pm 0.0}$ & 81.1 & $11.9^{\pm 0.3}$ & 6.2 & $44.7^{\pm 0.3}$ & 40.1 \\
    C-EP & $0.44^{\pm 0.03}$ & 0.20 & $6.47^{\pm 0.13}$ & 4.01 & $\bf{3.51}^{\pm 0.07}$ & 3.01 & $\bf{11.1}^{\pm 0.1}$ & 5.6 & $\bf{37.0}^{\pm 0.1}$ & 26.0 \\
    P-CpL & $0.66^{\pm 0.13}$ & 0.59 & $64.70^{\pm 35.77}$ & 65.31 & $40.1^{\pm 28.6}$ & 50.8 & $46.9^{\pm 3.8}$ & 57.7 & $77.9^{\pm 0.5}$ & 91.0 \\
    N-CpL & $0.50^{\pm 0.04}$ & 0.88 & $6.86^{\pm 0.05}$ & 6.27 & $80.4^{\pm 0.0}$ & 81.1 & $13.5^{\pm 0.3}$ & 10.2 & $51.9^{\pm 1.3}$ & 50.6 \\
    C-CpL & $0.44^{\pm 0.01}$ & 0.38 & $6.91^{\pm 0.11}$ & 5.29 & $4.23^{\pm 0.09}$ & 5.05 & $14.9^{\pm 0.1}$ & 14.6 & $46.5^{\pm 0.1}$ & 37.9 \\
    \bottomrule
\end{tabular}
\end{table}

\end{document}